\newlength\mylenin
\let\oldnl\nl%
\newcommand{\nonl}{\renewcommand{\nl}{\let\nl\oldnl}}%
\newlength\mylenout
\newtheorem{theorem}{Theorem}
\newtheorem{lemma}{Lemma}
\newcommand{\R}{\mathbb{R}}
\newcommand{\mP}{\mathcal{P}}
\newcommand{\bP}{\mathbf{P}}
\newcommand{\U}{\mathcal{U}}
\newcommand{\Dp}{D_{\mP}}
\newcommand{\ceil}[1]{\left\lceil #1 \right\rceil}
\newcommand{\E}{\mathbb{E}}
\newcommand{\F}{\mathbf{F}}
\newcommand{\sm}{\text{SM}}
\newcommand{\M}{\text{max}}
\newcommand{\pim}{p_{\M}^i}
\newcommand{\pcm}{p_{\M}^c}
\newcommand{\w}{\boldsymbol{w}}
\newcommand{\C}{\mathcal{C}}
\newcommand{\mS}{\mathcal{S}}
\newcommand{\mA}{\mathcal{A}}
\newcommand{\ti}{\text{tiers}}
\newcommand{\ttt}[1]{\texttt{#1}}
\definecolor{ilias_color_TM}{RGB}{191, 232, 255}
\definecolor{stelios_colour}{RGB}{144, 238, 144}
\definecolor{light_red}{RGB}{245, 135, 66}
\newcommand{\tool}{FjORD\xspace}
\newif\ifcomment
\newif\ifarxiv
    \newcommand{\blue}[1]{\textcolor{blue}{#1}}
    \newcommand{\stelios}[1]{\sethlcolor{stelios_colour}\hl{[\textbf{Stelios:} #1]}}
    \newcommand{\il}[1]{\sethlcolor{lime}\hl{[Ilias: #1]}}
    \newcommand{\steve}[1]{\sethlcolor{cyan}\hl{[Stefanos: #1]}}
    \newcommand{\manote}[1]{\sethlcolor{light_red}\hl{[Mario: #1]}}
    \newcommand{\nic}[1]{\sethlcolor{yellow}\hl{[Nic: #1]}}
    \newcommand{\samuel}[1]{\sethlcolor{lightgray}\hl{[Sam: #1]}}
    \newcommand{\blue}[1]{#1}
    \newcommand{\stelios}[1]{}
    \newcommand{\steve}[1]{}
    \newcommand{\il}[1]{}
    \newcommand{\manote}[1]{}
    \newcommand{\nic}[1]{}
    \newcommand{\samuel}[1]{}
\title{\tool: Fair and Accurate Federated Learning \\
under heterogeneous targets with Ordered Dropout}
\author{%
  Samuel Horv\'{a}th\thanks{Indicates equal contribution.} \\
  KAUST\thanks{Work while intern at Samsung AI Center.}\\
  Thuwal, KSA \\
  \texttt{samuel.horvath@kaust.edu.sa} \\
  \And
  Stefanos Laskaridis$^{*}$ \\
  Samsung AI Center \\
  Cambridge, UK \\
  \texttt{mail@stefanos.cc} \\
  \And
  Mario Almeida$^{*}$ \\
  Samsung AI Center \\
  Cambridge, UK \\
  \texttt{mario.a@samsung.com} \\
  \AND
  \And
  Ilias Leontiadis \\
  Samsung AI Center \\
  Cambridge, UK \\
  \texttt{i.leontiadis@samsung.com} \\
  \And
  Stylianos I. Venieris \\
  Samsung AI Center \\
  Cambridge, UK \\
  \texttt{s.venieris@samsung.com} \\
  \And
  Nicholas D. Lane \\
  Samsung AI Center \&\\
  University of Cambridge \\
  Cambridge, UK \\
  \texttt{nic.lane@samsung.com} \\
}
\begin{document}

\maketitle

\begin{abstract}
  Federated Learning (FL) has been gaining significant traction across different ML tasks, ranging from vision to keyboard predictions. In large-scale deployments, client heterogeneity is a fact and constitutes a primary problem for fairness, training performance and accuracy. Although significant efforts have been made into tackling statistical data heterogeneity, the diversity in the processing capabilities and network bandwidth of clients, termed as system heterogeneity, has remained largely unexplored. Current solutions either disregard a large portion of available devices or set a uniform limit on the model's capacity, restricted by the least capable participants.
In this work, we introduce Ordered Dropout, a mechanism that achieves an ordered, nested representation of knowledge in deep neural networks (DNNs) and enables the extraction of lower footprint submodels without the need of retraining. We further show that for linear maps our Ordered Dropout is equivalent to SVD.  We employ this technique, along with a self-distillation methodology, in the realm of FL in a framework called \tool. \tool alleviates the problem of client system heterogeneity by tailoring the model width to the client's capabilities. 
Extensive evaluation on both CNNs and RNNs across diverse modalities shows that \tool consistently leads to significant performance gains over state-of-the-art baselines, while maintaining its nested structure.

\end{abstract}

\vspace{-1.3em}
\section{Introduction}
\label{sec:intro}

\vspace{-1.0 em}
Over the past few years, advances in deep learning have revolutionised the way we interact with everyday devices.
Much of this success relies on the availability of large-scale training infrastructures and the collection of vast amounts of training data.
However, users and providers are becoming increasingly aware of the privacy implications of this ever-increasing data collection, leading to the creation of various privacy-preserving initiatives by service providers~\cite{apple} and government regulators~\cite{gdpr}.

Federated Learning (FL)~\cite{fedavg2017aistats} is a relatively new subfield of machine learning (ML) that allows the training of models without the data leaving the users' devices; instead, FL allows users to collaboratively train a model by moving the computation to them. At each round, participating devices download the latest model and compute an updated model using their local data. These locally trained models are then sent from the participating devices back to a central server where updates are aggregated for next round's global model.
Until now, a lot of research effort has been invested with the sole goal of maximising the accuracy of the global model~\cite{fedavg2017aistats,liang2019think,fedprox2020mlsys,scaffold2020icml,fednova2020neurips}, while complementary mechanisms have been proposed to ensure privacy and robustness~\cite{secure_aggreg2017ccs,differentially2017neuripsw,differentially2018iclr,feature_leak2019sp,diff_priv_fl2020jiot,backdoor_fl2020aistats}.

\begin{figure}[t]
\vspace{-.1cm}
\centering
    \begin{minipage}[b][][b]{0.4\textwidth}
    \centering
    \includegraphics[width=\textwidth]{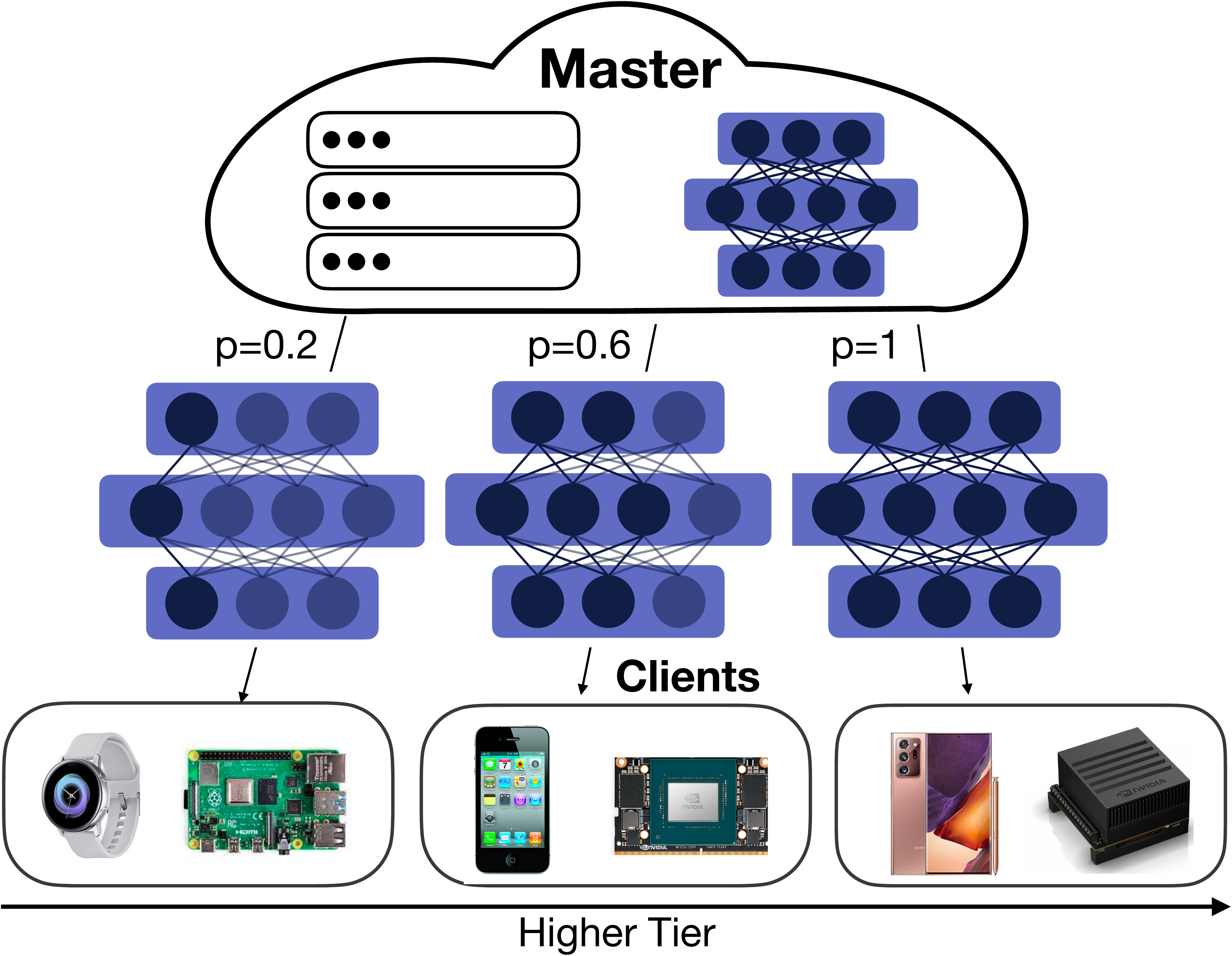}
    \caption{\tool employs OD to tailor the amount of computation to the capabilities of each participating device.}
    \label{fig:fjord}
    \end{minipage}\hfill
    \begin{minipage}[b][][b]{0.56\textwidth}
    \centering
    \includegraphics[width=\columnwidth]{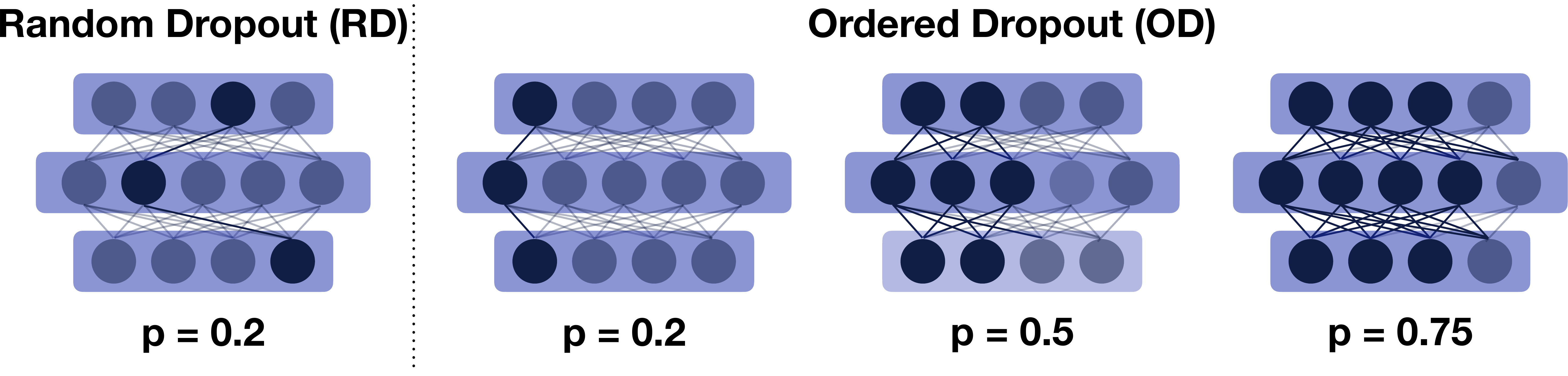}
    \caption{Ordered vs. Random Dropout. In this example, the left-most features are used by more devices during training, creating a natural ordering to the importance of these features.}
    \label{fig:od}
    \end{minipage}
    \vspace{-.7cm}
\end{figure}

A key challenge of deploying FL in the wild is the vast \textit{heterogeneity of devices}~\cite{challenges_fl2020msp}, ranging from low-end IoT to flagship mobile devices. 
Despite this fact, the widely accepted norm in FL is that the local models have to share the \textit{same} architecture as the global model.
Under this assumption, developers typically opt to either drop low-tier devices from training, hence introducing training bias due to unseen data~\cite{kairouz2019advances}, or limit the global model's size to accommodate the slowest clients, leading to degraded accuracy due to the restricted model capacity~\cite{feddropout2018arxiv}. In addition to these limitations, variability in sample sizes, computation load and data transmission speeds further contribute to a very unbalanced training environment. 
Finally, the resulting model might not be as efficient as models specifically tailored to the capabilities of each device tier to meet the minimum processing-performance requirements~\cite{hapi2020iccad}.

In this work, we introduce \tool (Fig.~\ref{fig:fjord}), a novel adaptive training framework that enables heterogeneous devices to participate in FL by dynamically adapting model size -- and thus computation, memory and data exchange sizes -- to the available client resources. To this end, we introduce Ordered Dropout (OD), a mechanism for run-time \textit{ordered (importance-based) pruning}, which enables us to extract and train submodels in a nested manner. As such, OD enables \textit{all} devices to participate in the FL process independently of their capabilities by training a submodel of the original DNN, while still contributing knowledge to the global model. Alongside OD, we propose a \textit{self-distillation} method from the maximal supported submodel on a device to enhance the feature extraction of smaller submodels. Finally, our framework has the additional benefit of producing models that can be dynamically scaled during inference, based on the hardware and load constraints of the device. 

Our evaluation shows that \tool enables significant accuracy benefits over the baselines across diverse datasets and networks, while allowing for the extraction of \textit{submodels} of varying FLOPs and sizes \textit{without} the need for \textit{retraining}.

\vspace{-1.3em}
\section{Motivation}
\label{sec:background}
\vspace{-1.0 em}

Despite the progress on the accuracy front, the unique deployment challenges of FL still set a limit to the attainable performance. FL is typically deployed on either siloed setups, such as among hospitals, or on mobile devices in the wild~\cite{sysdesign_fl2019mlsys}. In this work, we focus on the latter setting.
Hence, while cloud-based distributed training uses powerful high-end clients~\cite{fbdatacenter2018hpca}, in FL these are commonly substituted by resource-constrained and heterogeneous embedded devices. 

In this respect, FL deployment is currently hindered by the vast \textit{heterogeneity} of client hardware~\cite{facebook2019hpca,ai_benchmark_2019,sysdesign_fl2019mlsys}. On the one hand, different mobile hardware leads to significantly varying processing speed~\cite{embench2019emdl}, in turn leading to longer waits upon aggregation of updates (\textit{i.e.} stragglers).  
At the same time, devices of mid and low tiers might not even be able to support larger models, \textit{e.g.}~the model does not fit in memory or processing is slow, and, thus, are either excluded or dropped upon timeouts from the training process, together with their unique data. More interestingly, the resource allocation to participating devices may also reflect on demographic and socio-economic information of owners, that makes the exclusion of such clients unfair~\cite{kairouz2019advances} \blue{in terms of participation}. 
Analogous to the device load and heterogeneity, a similar trend can be traced in the downstream (model)
and upstream (updates)
network communication in FL, which can be an additional substantial bottleneck for the training procedure \cite{comms_fl2020tnnls}. %

\vspace{-1.3em}
\section{Ordered Dropout}
\label{sec:ordered_dropout}
\vspace{-1.0 em}

In this paper, we firstly introduce the tools that act as enablers for heterogeneous federated training. Concretely, we have devised a mechanism of importance-based pruning for the easy extraction of subnetworks from the original, specially trained model, each with a different computational and memory footprint. We name this technique \textbf{Ordered Dropout} (OD), as it orders knowledge representation in \textit{nested submodels} of the original network.

More specifically, our technique starts by sampling a value (denoted by~$p$) from a distribution of candidate values. Each of these values corresponds to a specific submodel, which in turn gets translated to a specific computational and memory footprint (see Table~\ref{tab:macs}). Such sampled values and associations are depicted in Fig.~\ref{fig:od}. Contrary to conventional dropout (RD), our technique drops adjacent components of the model instead of random neurons, which translates to computational benefits\footnote{\blue{OD, through its nested pruning scheme that requires neither additional data structures for bookkeeping nor complex and costly data layout transformations, can capitalise directly over the existing and highly optimised dense matrix multiplication libraries.}} in today's linear algebra libraries and higher accuracy as shown later.

\vspace{-1em}
\subsection{Ordered Dropout Mechanics}
\label{sec:od_mechanics}
\vspace{-0.6em}

The proposed OD method is parametrised with respect to: i)~the value of the dropout rate $p \in (0,1]$ per layer, ii)~the set of candidate values $\mP$, such that $p\in\mP$ and iii) the sampling method of $p$ over the set of candidate values, such that $p\sim \Dp$, where $\Dp$ is the distribution over $\mP$.

A primary hyperparameter of OD is the dropout rate $p$ which defines how much of each layer is to be included, with the rest of the units dropped in a structured and ordered manner. The value of $p$ is selected by sampling from the dropout distribution $\Dp$ which is represented by a set of discrete values $\mP = \{s_1, s_2, \dots, s_{|\mP|}\}$ such that $0$$<$$s_1$$<$$\dots$$<$$s_{|\mP|} \leq 1$ and probabilities $\bP(p=s_i) > 0, \; \forall i \in [|\mP|]$ such that  $\sum_{i=1}^{|\mP|}\bP(p=s_i) = 1$. For instance, a uniform distribution over $\mP$ is denoted by $p \sim \U_{\mP}$ (\textit{i.e.}~$D=\U$). In our experiments we use uniform distribution over the set $ \mP = \{\nicefrac{i}{k}\}_{i=1}^k$, which we refer to as $\U_k$ (or \textit{uniform-$k$}).
The discrete nature of the distribution stems from the innately discrete number of neurons or filters to be selected. The selection of set $\mP$ is discussed in the next subsection.

The dropout rate $p$ can be constant across all layers or configured individually per layer $l$, leading to $p_l \sim D^l_{\mP}$.
As such an approach opens the search space dramatically, we refer the reader to NAS techniques~\cite{nas2017iclr} and continue with 
the same $p$ value across network layers for simplicity, without hurting the generality of our approach.

Given a $p$ value, a pruned $p$-subnetwork can be directly obtained as follows. For each\footnote{\blue{Note that $p$ affects the number of output channels/neurons and thus the number of input channels/neurons of the next layer.} Furthermore, OD is not applied \blue{on the input and} last layer to maintain the same dimensionality.} layer $l$ with width\footnote{\textit{i.e.}~neurons for fully-connected layers (linear and recurrent) and filters for convolutional layers. \blue{RNN cells can be seen as a set of linear feedforward layers with activation and composition functions.}} $K_l$ , the submodel for a given $p$ has all neurons/filters with index $\{0, 1, \dots, \ceil{ p\cdot K_l}-1\}$ included and \mbox{$\{\ceil{p \cdot K_l}, \dots, K_l-1\}$} pruned. Moreover, the unnecessary connections between pruned neurons/filters are also removed\footnote{
For BatchNorm, we maintain a separate set of statistics for every dropout rate $p$. 
This has only a marginal effect on the number of parameters and can be used in a privacy-preserving manner~\cite{li2021fedbn}.}.
We denote a pruned $p$-subnetwork $\F_p$ with its weights $\w_p$, where $\F$ and $\w$ are the original network and weights, respectively.
Importantly, contrary to existing pruning techniques~\cite{pruning2015neurips,snip2019iclr,molchanov2019importance}, a $p$-subnetwork from OD can be directly obtained post-training without the need to fine-tune, thus eliminating the requirement to access any labelled data.

\begin{figure*}[t]
    \vspace{-0.4cm}
    \centering
    \begin{tabular}{ccc}
        \subfloat[ResNet18 - CIFAR10]{
            \includegraphics[width=0.3\textwidth]{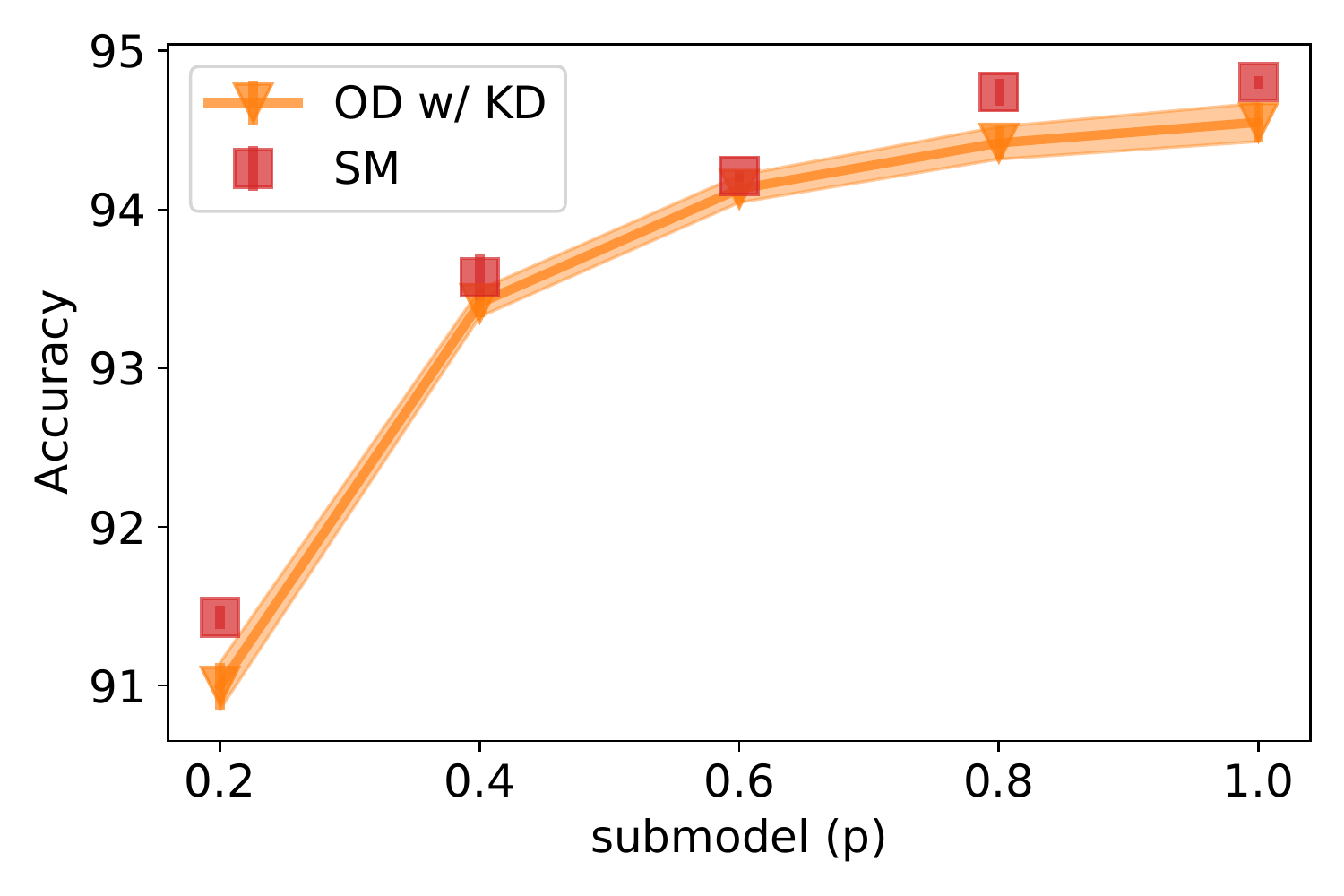}
        }
        \subfloat[CNN - EMNIST]{
            \includegraphics[width=0.3\textwidth]{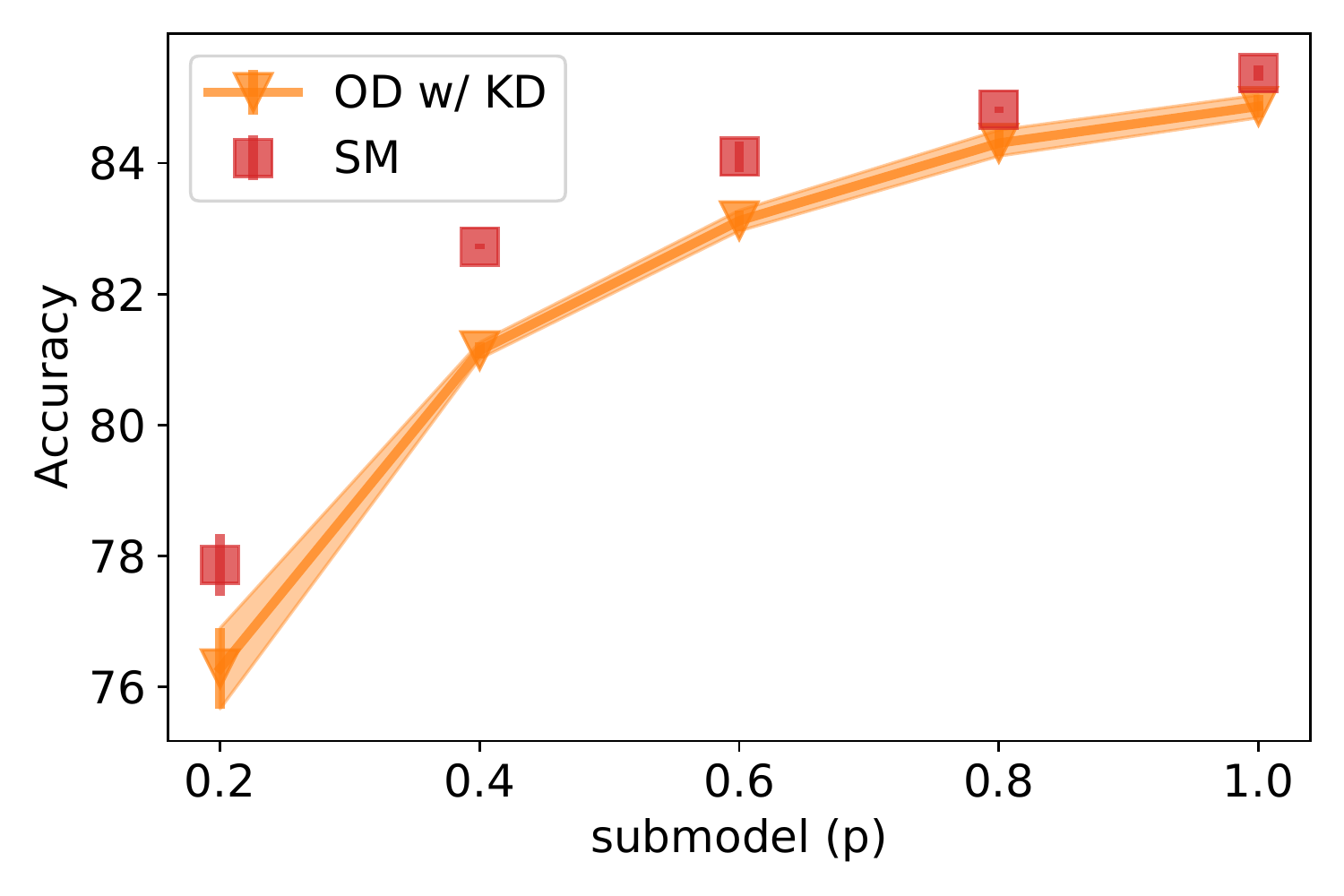}
        }
        \subfloat[RNN - Shakespeare]{
            \includegraphics[width=0.3\textwidth]{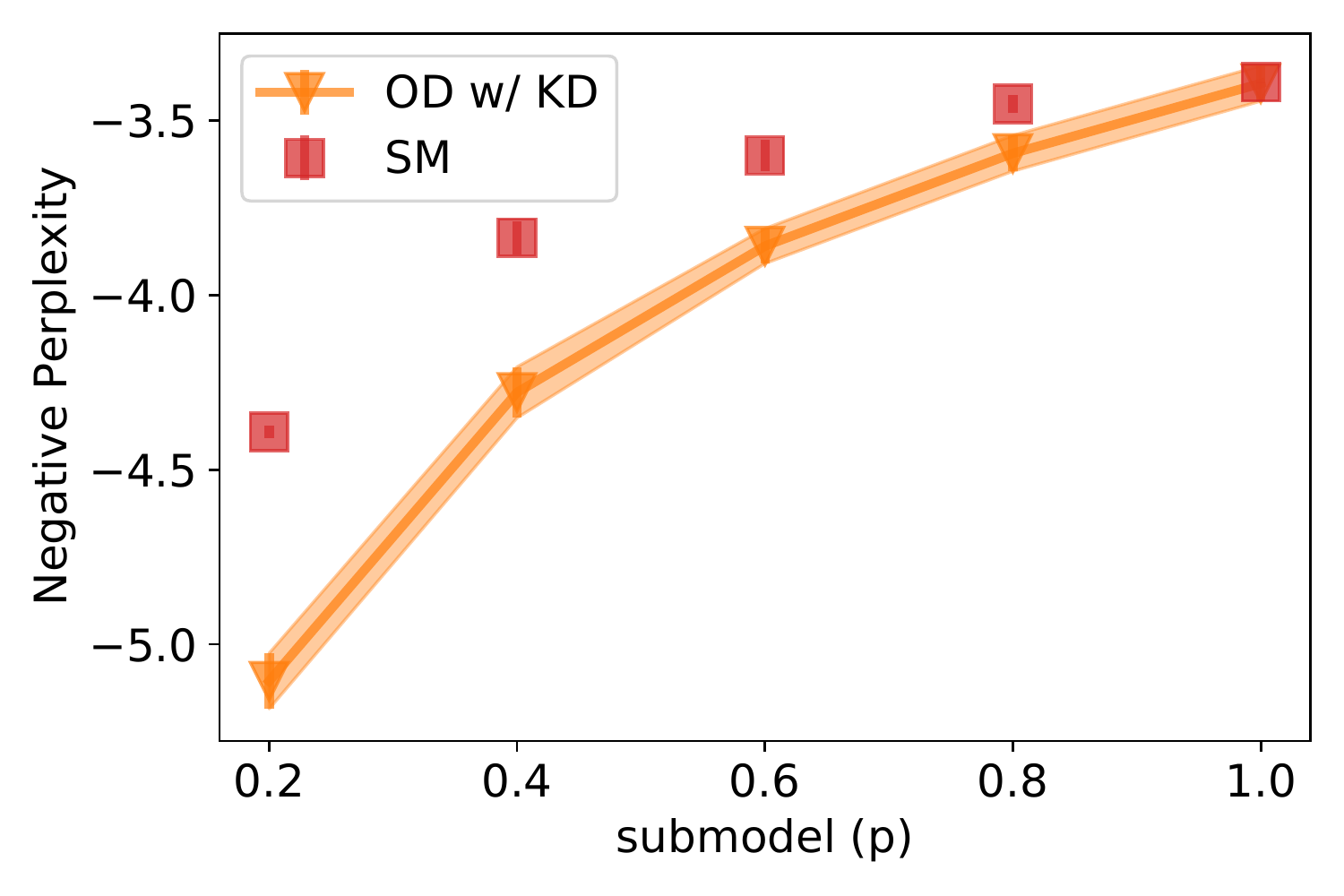}
        }
    \end{tabular}
    \vspace{-0.6em}
    \caption{Full non-federated datasets. OD-Ordered Dropout with $\Dp = \U_5$, SM-single independent models, KD-knowledge distillation.}
    \label{fig:non-fl-baseline}
    \vspace{-0.6cm}
\end{figure*}

\vspace{-1em}
\subsection{Training OD Formulation}
\label{sec:od-training}
\vspace{-0.6em}

We propose two ways to train an OD-enabled network: i)~\textit{plain OD} and ii)~\textit{knowledge distillation OD} training (OD w/ KD).
In the first approach, in each step we first sample $p \sim \Dp$; then we perform the forward and backward pass using the $p$-reduced network $\F_p$; finally we update the submodel's weights using the selected optimiser.
Since sampling a $p$-reduced network provides us significant computational savings on average, we can exploit this reduction to further boost accuracy.
Therefore, in the second approach we exploit the nested structure of OD, \textit{i.e.}~\mbox{$p_1 < p_2 \implies \F_{p_1} \subset \F_{p_2}$} and allow for the bigger capacity supermodel to teach the sampled $p$-reduced network at each iteration via knowledge distillation (teacher $p_{\M} > p$, $p_{\M}  = \max \mP$). 
In particular, in each iteration, the loss function consists of two components as follows: 

\vspace{-2em}
\begin{small}
    \begin{flalign*}
        \mathcal{L}_d(\sm_{p},  & ~\sm_{p_{\M}}, \boldsymbol{y}_{\text{label}}) =  (1 - \alpha)\text{CE}(\max(\sm_{p}), \boldsymbol{y}_{\text{label}})   + \alpha \text{KL}(\sm_{p}, \sm_{p_{\M} }, T)
    \end{flalign*}
    \label{eq:distill}
\end{small}
\vspace{-2em}

where $\sm_p$ is the \textit{softmax} output of the sampled $p$-submodel, $\boldsymbol{y}_{\text{label}}$ is the ground-truth label, $\text{CE}$ is the cross-entropy function, $\text{KL}$ is the KL divergence, $T$ is the distillation temperature~\cite{hinton2014distilling} and $\alpha$ is the relative weight of the two components. We observed in our experiments always backpropagating also the teacher network further boosts performance. Furthermore, the best performing values for distillation were $\alpha = T = 1$, thus smaller models exactly mimic the teacher output. \blue{This means that new knowledge propagates in submodels by proxy, \textit{i.e.}~by backpropagating on the teacher, leading to the following loss function:}

\vspace{-1.8em}
\begin{small}
    \begin{flalign*}
        \mathcal{L}_d(\sm_{p},  & ~\sm_{p_{\M}}, \boldsymbol{y}_{\text{label}}) =  \text{KL}(\sm_{p}, \sm_{p_{\M} }, T) + \text{CE}(\max(\sm_{p_{\M}}), \boldsymbol{y}_{\text{label}}) 
    \end{flalign*}
    \label{eq:distill_final}
\end{small}

\vspace{-2em}
\subsection{Ordered Dropout exactly recovers SVD}
\vspace{-0.6em}

We further show that our new OD formulation can recover the Singular Value Decomposition (SVD) in the case where there exists a linear mapping from features to responses. We formalise this claim in the following theorem.

\begin{theorem}
\label{thm:od_is_svd}
Let $\F: \R^n \rightarrow \R^m $ be a NN with two fully-connected linear layers with no activation or biases and $K = \min\{m,n\}$ hidden neurons. Moreover, let data $\mathcal{X}$ come from a uniform distribution on the $n$-dimensional unit ball and $A$ be an $m \times n$ full rank matrix with $K$ distinct singular values. If response $y$ is linked to data $\mathcal{X}$ via a linear map:  $x \rightarrow Ax$ and distribution $\Dp$ is such that for every $b \in [K]$ there exists $p \in \mP$ for which $b = \ceil{ p\cdot K}$, then for the optimal solution of 
$$\min_{\F} \E_{x \sim \mathcal{X}} \E_{p \sim \Dp}\|\F_p(x) - y \|^2$$
\blue{it holds $\F_p(x) = A_b x$, where $A_b$ is the best $b$-rank approximation of $A$ and $b = \ceil{ p\cdot K}$}.
\end{theorem}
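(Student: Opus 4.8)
The plan is to reduce the population least-squares objective to a purely algebraic low-rank matrix-approximation problem, and then exploit the nested structure of Ordered Dropout together with the Eckart–Young–Mirsky theorem. First I would write the two-layer network as $\F(x) = W_2 W_1 x$ with $W_1 \in \R^{K\times n}$ and $W_2 \in \R^{m\times K}$, and observe that the $p$-subnetwork keeps only the first $b = \ceil{p \cdot K}$ hidden units, so its effective linear map is
$$M_b = (W_2)_{:,1:b}\,(W_1)_{1:b,:} = \sum_{i=1}^{b} u_i v_i^{\top},$$
where $u_i$ is the $i$-th column of $W_2$ and $v_i^{\top}$ the $i$-th row of $W_1$. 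The key point of this rewriting is that it exposes the nested rank-one structure $M_b = M_{b-1} + u_b v_b^{\top}$, which mirrors exactly the nestedness of OD.

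Next I would dispatch the two expectations. Since $x$ is uniform on the unit ball, symmetry gives $\E[x x^{\top}] = c\,I_n$ for a positive constant $c$, whence $\E_x \|M x - A x\|^2 = c\,\|M - A\|_F^2$ for any fixed $M$. Combined with the discreteness of $\Dp$ and the hypothesis that every $b \in [K]$ equals $\ceil{p K}$ for some $p$ of positive probability, the objective becomes, up to the positive factor $c$, the weighted sum $\sum_{b=1}^{K} q_b \,\| M_b - A \|_F^2$ with all $q_b > 0$. As $u_i, v_i$ range freely, $M_b$ ranges over all matrices of rank at most $b$, so each summand is bounded below by $\|A_b - A\|_F^2$ by Eckart–Young–Mirsky, where $A_b$ is the rank-$b$ truncated SVD of $A$.

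The crux is to show these termwise lower bounds are attained \emph{simultaneously} and that any optimizer forces $M_b = A_b$. I would exhibit the assignment $u_i v_i^{\top} = \sigma_i \tilde u_i \tilde v_i^{\top}$ read off from the SVD $A = \sum_i \sigma_i \tilde u_i \tilde v_i^{\top}$; because truncated SVDs are themselves nested, this yields $M_b = A_b$ for every $b$ at once, so the weighted sum meets its termwise lower bound $\sum_b q_b \|A_b - A\|_F^2$. This is precisely where the nested constraint turns out to be \emph{free}: the SVD is the decomposition whose partial sums solve all the rank-$b$ problems at the same time, so coupling the terms across $b$ costs nothing relative to solving each independently.

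Finally I would establish the converse. Because all $q_b > 0$ and a point attaining every termwise minimum exists, any global minimizer must attain each $\|M_b - A\|_F^2 = \|A_b - A\|_F^2$ separately — otherwise the weighted sum would strictly exceed the achieved optimum. Hence each $M_b$ is a best rank-$b$ approximation of $A$; since $A$ has distinct singular values, the strict gap $\sigma_b > \sigma_{b+1}$ makes this best approximation unique and equal to $A_b$ (for $b = K$ we instead use that $A$ is full rank, so $M_K = A$). Therefore $M_b = A_b$, i.e. $\F_p(x) = A_b x$ with $b = \ceil{p K}$, as claimed. The hard part will be this uniqueness step: making rigorous why the distinct-singular-value hypothesis is exactly what eliminates alternative optimizers, and why the nesting across terms induces no loss — both of which ultimately rest on the uniqueness clause of Eckart–Young–Mirsky rather than on any property special to the learning setup.
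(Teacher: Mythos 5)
Your proposal is correct and follows essentially the same route as the paper's proof: reduce the population risk to $\sum_b q_b\|M_b - A\|_F^2$ via the isotropy of the unit ball, lower-bound each term by Eckart--Young, and note that the nested partial sums of the SVD attain all $K$ bounds simultaneously. Your treatment of the converse (using positivity of all $q_b$ and uniqueness of the best rank-$b$ approximation under distinct singular values) is in fact slightly more explicit than the paper's one-line ``if and only if'' claim, but it is the same argument.
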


Theorem~\ref{thm:od_is_svd} shows that our OD formulation exhibits not only intuitively, but also theoretically ordered importance representation.
Proof of this claim is deferred to the Appendix.

\vspace{-1em}
\subsection{Model-Device Association}
\vspace{-0.6em}

\textbf{Computational and Memory Implications.}
\label{sec:od_comp_mem_gains}
The primary objective of OD is to alleviate the excessive \textit{computational} and \textit{memory} demands of the training and inference deployments.
When a layer is shrunk through OD, there is no need to perform the forward and backward passes or gradient updates on the pruned units. 
As a result, OD offers gains both in terms of FLOP count and model size. 
In particular, for every fully-connected and convolutional layer, the number of FLOPs and weight parameters is reduced by $\nicefrac{K_1 \cdot K_2}{\ceil{ p \cdot K_1 } \cdot \ceil{ p \cdot K_2 }} \sim \nicefrac{1}{p^2}$, where $K_1$ and $K_2$ correspond to the number of input and output neurons/channels, respectively. Accordingly, the bias terms are reduced by a factor of $\nicefrac{K_2}{\ceil{ p \cdot K_2 }} \sim \nicefrac{1}{p}$. The normalisation, activation and pooling layers are compressed in terms of FLOPs and parameters similarly to the biases in fully-connected and convolutional layers. This is also evident in Table~\ref{tab:macs}.
Finally, smaller model size also leads to reduced memory footprint for gradients and the optimiser's state vectors such as momentum. However, how are these submodels related to devices in the wild and how is this getting modelled?

\textbf{Ordered Dropout Rates Space.}
Our primary objective with OD is to tackle \textit{device heterogeneity}. Inherently, each device has certain capabilities and can run a specific number of model operations within a given time budget.
Since each $p$ value defines a submodel of a given width, we can indirectly associate a $\pim$ value with the $i$-th device capabilities, such as memory, processing throughput or energy budget. As such, each participating client is given at most the \mbox{$\pim$-submodel} it can handle. 

Devices in the wild, however, can have dramatically different capabilities; a fact further exacerbated by the co-existence of previous-generation devices. Modelling discretely each device becomes quickly intractable at scale. 
Therefore, we cluster devices of similar capabilities together and subsequently associate a single $\pim$ value with each cluster.
This clustering can be done heuristically (\textit{i.e.}~based on the specifications of the device) or via benchmarking of the model on the actual device and is considered a system-design decision for our paper.
As smartphones nowadays run a multitude of simultaneous tasks~\cite{starfish2015mobisys},
our framework can further support modelling of transient device load by reducing its associated $\pim$, which essentially brings the capabilities of the device to a lower tier at run time, thus bringing real-time adaptability to \tool.

Concretely, the discrete candidate values of $\mP$ depend on i)~the number of clusters and corresponding device tiers, ii)~the different load levels being modelled and iii)~the size of the network itself, as \textit{i.e.}~for each tier $i$ there exists $\pim$ beyond which the network cannot be resolved. In this paper, we treat the former two as invariants (assumed to be given by the service provider), but provide results across different number and distributions of clusters, models and datasets.

\vspace{-1em}
\subsection{Preliminary Results}
\label{sec:non-fl-exps}
\vspace{-0.6em}

Here, we present some results to showcase the performance of OD in the \textit{centralised} non-FL training setting (i.e. the server has access to all training data) across three tasks, explained in detail in \S~\ref{sec:evaluation}.
Concretely, we run OD with distribution $\Dp = \U_5$ (uniform distribution over the set $\{\nicefrac{i}{5}\}_{i=1}^5$) and compare it with end-to-end trained submodels (SM) trained in isolation for the given width of the model. Fig.~\ref{fig:non-fl-baseline} shows that across the three datasets, the best attained performance of OD along every width $p$ is very close to the performance of the baseline models. \blue{We extend this comparison against Random Dropout in the Appendix.}
We note at this point that the submodel baselines are trained from scratch, explicitly optimised to that given width with no possibility to jump across them, while our OD model was trained using a single training loop and offers the ability to switch between accuracy-computation points without the need to retrain.

\vspace{-1.3em}
\section{\tool}
\label{sec:fl_w_od}
\vspace{-1.0 em}

Building upon the shoulders of OD, we introduce \tool, a framework for federated training over \textit{heterogenous} clients. We subsequently describe the \tool's workflow, further documented in Alg.~\ref{alg:flanders_training}.

As a starting point, the global model architecture, $\F$, is initialised with weights $\w^0$, either randomly or via a pretrained network. The dropout rates space $\mP$ is selected along with distribution $\Dp$ with $|\mP|$ discrete candidate values, with each $p$ corresponding to a subnetwork of the global model with varying FLOPs and parameters. Next, the devices to participate are clustered into $|\C_\ti|$ tiers and a $\pcm$ value is associated with each cluster $c$. The resulting $\pcm$ represents the maximum capacity of the network that devices in this cluster can handle without violating a latency or memory constraint. 

At the beginning of each communication round $t$,  the set of participating devices $\mS_t$ is determined, which either consists of all available clients $\mA_t$ or contains only a random subset of $\mA_t$ based on the server's capacity. 
Next, the server broadcasts the current model to the set of clients $\mS_t$ and each client $i$ receives $\w_{\pim}$.
On the client side, each client runs $E$ local iterations and at each local iteration $k$, the device $i$ samples $p_{(i,k)}$ from conditional distribution  $\Dp|\Dp \leq \pim$ which accounts for its limited capability. Subsequently, each client updates the respective weights ($\w_{p_{(i,k)}}$) of the local submodel using the \texttt{FedAvg}~\cite{fedavg2017aistats} update rule. In this step, other strategies~\cite{fedprox2020mlsys,fednova2020neurips,scaffold2020icml} can be interchangeably employed.
At the end of the local iterations, each device sends its update back to the server.

Finally, the server aggregates these communicated changes and updates the global model, to be distributed in the next global federated round to a different subset of devices.
Heterogeneity of devices leads to heterogeneity in the model updates and, hence, we need to account for that in the global aggregation step. To this end, we utilise the following aggregation rule

\vspace{-1.55em}
\begin{flalign}
    \begin{small}
        \w^{t+1}_{s_j} \setminus \w^{t+1}_{s_{j - 1}} =  \text{WA} \left(\left\{ \w^{(i, t, E)}_{i_{s_j}} \setminus \w^{(i, t, E)}_{s_{j - 1}}\right\}_{i \in \mS^j_t}\right)
        \label{eq:global_update}
    \end{small}
\end{flalign}
\vspace{-0.45em}
where $\w_{s_j} \setminus \w_{s_{j - 1}}$ are the weights that belong to $\F_{s_j}$ but not to $\F_{s_{j - 1}}$, $\w^{t+1}$ the global weights at communication round $t+1$, $\w^{(i, t, E)}$ the weights on client $i$ at communication round $t$ after $E$ local iterations, \mbox{\small $\mS^j_t = \{i \in \mS_t: \pim \geq s_j\}$} a set of clients that have the capacity to update $\w_{s_j}$, and WA stands for weighted average, where weights are proportional to the amount of data on each client.  

\SetArgSty{textnormal} %
\setlength{\textfloatsep}{0pt}%
\begin{algorithm2e}[tb]	
    \footnotesize
    \SetAlgoLined
	\LinesNumbered
	\DontPrintSemicolon
	\KwIn{$\F, \w^0, \Dp, T, E$}
	
	\For( // \textit{Global rounds}){$t \gets 0$ \KwTo $T-1$}{
	    Server selects clients as a subset $\mS_t \subset \mA_t$ \;
	   Server broadcasts weights of $\pim$-submodel to each client $i\in \mS_t$ \;
	    \For( // \textit{Local iterations}){$k \gets 0$ \KwTo $E-1$}{
	        $\forall i \in \mS_t$: Device $i$ samples $p_{(i,k)} \sim \Dp | \Dp \leq \pim$ and updates the weights of local model
	    }
	    $\forall i \in \mS_t$: device $i$ sends to the server the updated weights $\w^{(i, t, E)}$  \;
	    Server updates $\w^{t+1}$ as in Eq.~(\ref{eq:global_update}) \;
	}
	\caption{\footnotesize \mbox{\textbf{\tool} (Proposed Framework)}}
	\label{alg:flanders_training}
\end{algorithm2e}
\textbf{Communication Savings.}
In addition to the computational savings (\S\ref{sec:od_comp_mem_gains}), OD provides additional \textit{communication} savings. First, for the server-to-client transfer, every device with $\pim < 1$ observes a reduction of approximately $\nicefrac{1}{(\pim)^2}$ in the downstream transferred data due to the smaller model size (\S~\ref{sec:od_comp_mem_gains}). Accordingly, the upstream client-to-server transfer is decreased by $\nicefrac{1}{(\pim)^2}$ as only the gradient updates of the unpruned units are transmitted.

\textbf{Identifiability.}
A standard procedure in FL is to perform element-wise averaging to aggregate model updates from clients.
However, coordinate-wise averaging of updates may have detrimental effects on the accuracy of the global model, due to the permutation invariance of the hidden layers.
Recent techniques tackle this problem by matching clients' neurons before averaging~\cite{yurochkin2019bayesian,singh2020model,wang2020federated}. Unfortunately, doing so is  computationally expensive and hurts scalability. 
\tool mitigates this issue since it exhibits the natural importance of neurons/channels within each hidden layer by design; essentially OD acts in lieu of a neuron matching algorithm without the computational overhead.

\textbf{Subnetwork Knowledge Transfer.}
In \S~\ref{sec:od-training}, we introduced knowledge distillation for our OD formulation. We extend this approach to \tool, where instead of the full network, we employ width \mbox{$\max\{p \in \mP: p \leq \pim \}$} as a teacher network in each local iteration on device $i$. \blue{We provide the alternative of \tool \textit{with} knowledge distillation mainly as a solution for cases where the client bottleneck is memory- or network-related, rather than computational in nature~\cite{comm_eff_fl2016neuripsw}. However, in cases where client devices are computationally bound in terms of training latency, we propose \tool \textit{without} KD or decreasing $\pim$ to account for the overhead of KD.}

\vspace{-1.3em}
\section{Evaluation of \tool}
\label{sec:evaluation}
\vspace{-1.0 em}

In this section, we provide a thorough evaluation of \tool and its components across different tasks, datasets, models and device cluster distributions to show its performance, elasticity and generality.

\textbf{Datasets and Models.} 
We evaluate \tool on 
two vision and one text prediction task, shown in  Table~\ref{tab:datasets}. For {CIFAR10}~\cite{cifar}, we use the ``CIFAR'' version of ResNet18~\cite{resnet}. We federate the dataset by randomly dividing it into equally-sized partitions, each allocated to a specific client, and thus remaining IID in nature.
For {FEMNIST}, we use a CNN with two convolutional layers followed by a
softmax layer. For {Shakespeare}, we employ a RNN with an embedding layer (without dropout) followed by two LSTM~\cite{lstm} layers and a
softmax layer. 
We report the model's performance of the last epoch on the test set which is constructed by combining the test data for each client.
We report top-$1$ accuracy vision tasks and negative perplexity for text prediction.
Further details, such as hyperparameters, description of datasets and models are available in the Appendix. 

\begin{table}[h]
    \vspace{-0.5cm}
    \caption{Datasets and models}
    \begin{subtable}[t]{.5\textwidth}
    \centering
    \setlength{\tabcolsep}{1pt}
    \tiny
        \begin{tabular}{llrrll}
            \toprule
            Dataset     & Model    & \# Clients & \# Samples & Task \\ \midrule
            CIFAR10     & ResNet18 & $100$         & $50,000$      & Image classification \\
            FEMNIST     & CNN  & $3,400$             & $671,585$       & Image classification      \\
            Shakespeare & RNN      & $715$          & $38,001$      & Next character prediction \\
            \bottomrule
        \end{tabular}
    \caption{\small Datasets description}
    \label{tab:datasets}
    \vspace{-0.3cm}
    \end{subtable}%
    \begin{subtable}[t]{.5\textwidth}
    \centering
    \tiny
        \begin{tabular}{lrrrrr}
         & $p=  0.2$  & $0.4$  & $0.6$   & $0.8$   & $1.0$   \\ \bottomrule
        \multicolumn{6}{c}{CIFAR10 / ResNet18}                           \\
        MACs   & 23M & 91M & 203M& 360M & 555M \\
        Params & 456K  & 2M & 4M   & 7M   & 11M  \\ \bottomrule
        \multicolumn{6}{c}{FEMNIST / CNN}                                \\ %
        MACs   & 47K  & 120K   & 218K    & 342K    & 491K    \\
        Params & 5K     & 10K     & 15K     & 20K     & 26K     \\ \bottomrule
        \multicolumn{6}{c}{Shakespeare / RNN}                            \\ %
        MACs   & 12K    & 40K    & 83K     & 143K    & 216K    \\
        Params & 12K    & 40K    & 82K     & 142K    & 214K    \\ \bottomrule
        \end{tabular}
    \caption{\small MACs and parameters per $p$-reduced network}
    \label{tab:macs}
    \end{subtable}
    \vspace{-0.8cm}
\end{table}

\vspace{-0.5cm}
\subsection{Experimental Setup}
\label{sec:exp_setup}
\vspace{-0.6em}

\textbf{Infrastructure.}
\tool was implemented on top of the \ttt{Flower} (v0.14dev)~\cite{beutel2020flower} framework and PyTorch (v1.4.0) \cite{pytorch2019}.
We run all our experiments on a private cloud cluster, consisting of Nvidia V100 GPUs. To scale to hundreds of clients on a single machine, we optimized \ttt{Flower} so that clients only allocate GPU resources when actively participating in a federated client round. We report average performance and the standard deviation across three runs for all experiments. 
To model client availability, we run up to $100$ \ttt{Flower} clients in parallel and sample 10\% at each global round, with the ability for clients to switch identity at the beginning of each round to overprovision for larger federated datasets.
Furthermore, we model client heterogeneity by assigning each client to one of the device clusters. We provide the following setups:
\begin{itemize}[leftmargin=*,label={},noitemsep,topsep=0pt]
    \item \textbf{Uniform-\{5,10\}:} This refers to the distribution $\Dp$, \textit{i.e.}~$p \sim \U_{k}$, with $k=5$ or $10$.
    \item \textbf{Drop Scale $\in \{0.5,1.0\}$:} This parameter affects a possible skew in the number of devices per cluster.
    It refers to the drop in clients per cluster of devices, as we go to higher $p$'s. 
    Formally, for \textit{uniform-n} and drop scale~$ds$, the high-end cluster $n$ contains $1$$-$$\sum_{i=0}^{n-1}\nicefrac{ds}{n}$ of the devices and the rest of the clusters contain $\nicefrac{ds}{n}$ each.
    Hence, for $ds$$=$$1.0$ of the \mbox{\textit{uniform-5}} case, all devices can run the $p=0.2$ subnetwork, 80\% can run the $p=0.4$ and so on, leading to a device distribution of $(0.2, ..., 0.2)$. This percentage drop is half for the case of $ds$$=$$0.5$, %
    resulting in a larger high-end cluster, \textit{e.g.}~$(0.1, 0.1, ..., 0.6)$.
\end{itemize}

\textbf{Baselines.}
To assess the performance against the state-of-the-art, we compare \tool with the following baselines: i) Extended Federated Dropout (eFD), ii) \tool with eFD (\tool w/ eFD). 

eFD builds on top of the technique of Federated Dropout (FD)~\cite{feddropout2018arxiv}, which adopts a Random Dropout (RD) at neuron/filter level for minimising the model's footprint. However, FD does not support adaptability to heterogeneous client capabilities out of the box, as it inherits a \textit{single} dropout rate across devices.
For this reason, we propose an extension to FD, allowing to adapt the dropout rate to the device capabilities, defined by the respective cluster membership. 
It is clear that eFD dominates FD in performance 
and provides a tougher baseline, as the latter needs to impose the same dropout rate to fit the model at hand on all devices, leading to larger dropout rates (\textit{i.e.}~uniform dropout of 80\% for full model to support the low-end devices).
We provide empirical evidence for this in the Appendix.
For investigative purposes, we also applied eFD on top of \tool, as a means to update a larger part of the model from lower-tier devices, \textit{i.e.}~allow them to evaluate submodels beyond their $\pim$ during training. %

\begin{figure*}[t]
    \vspace{-0.4cm}
    \centering
    \begin{tabular}{ccc}
        \subfloat[ResNet18 - CIFAR10]{
            \includegraphics[width=0.3\textwidth]{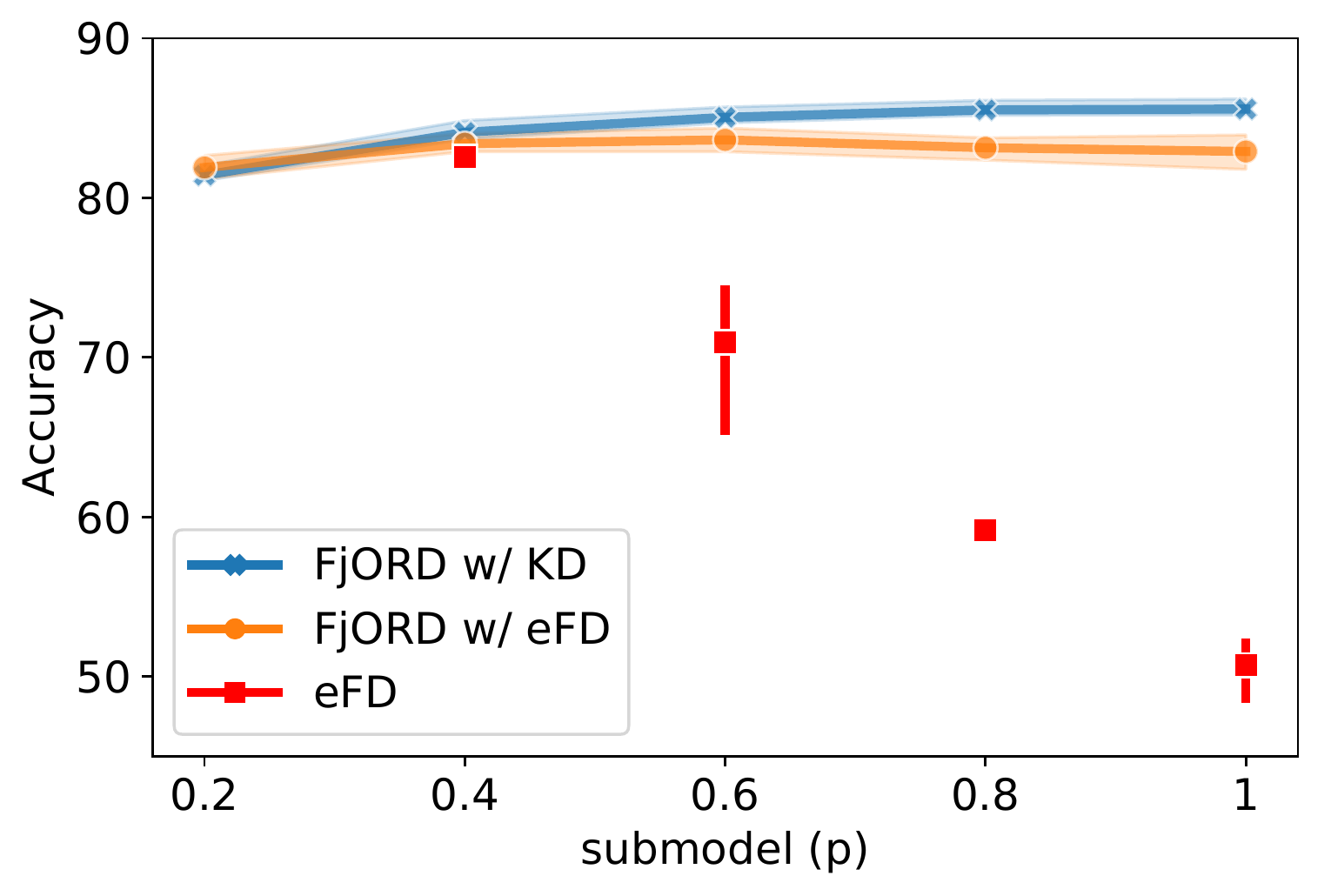}
            }
        \hfill
        \subfloat[CNN - FEMNIST]{
            \includegraphics[width=0.3\textwidth]{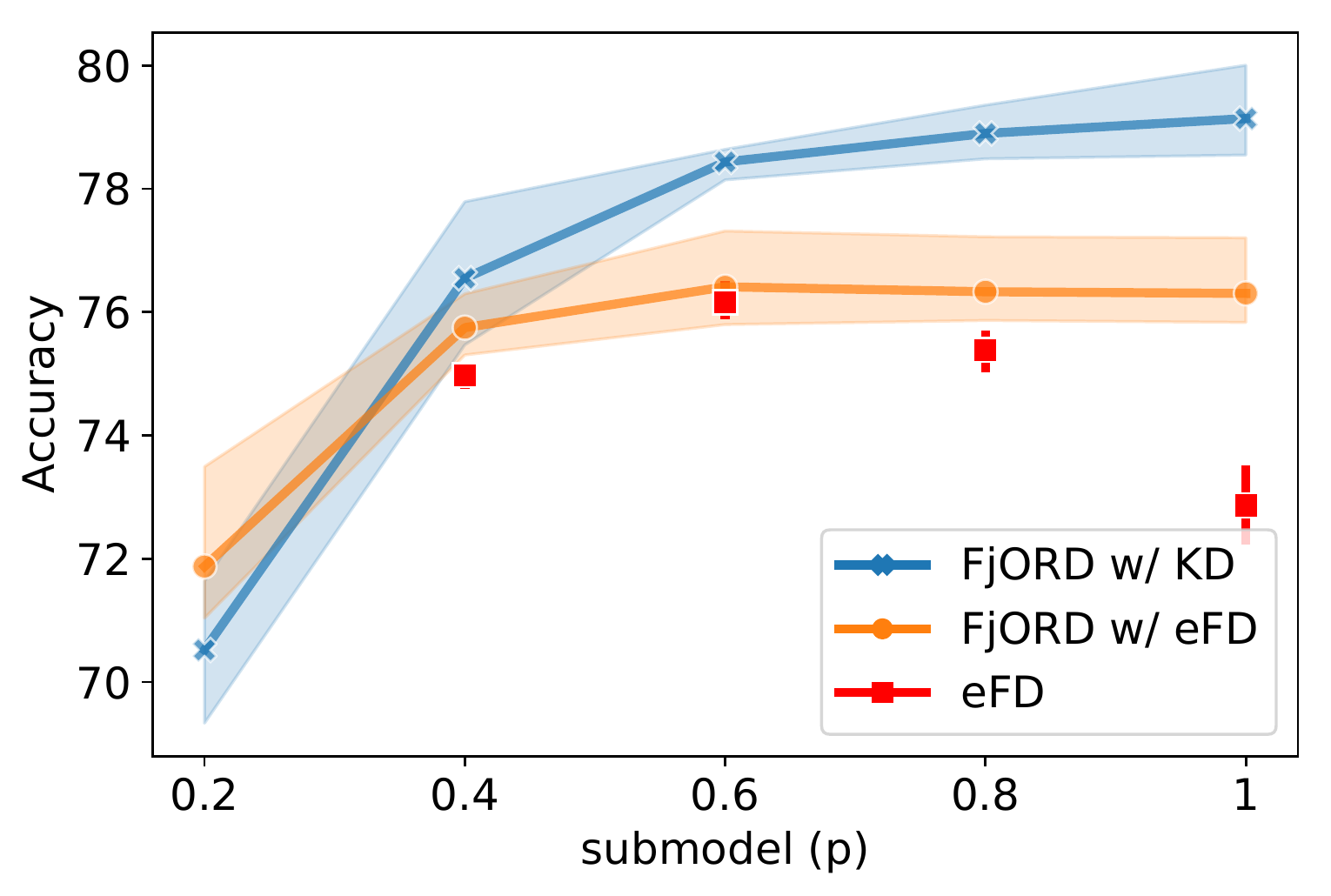}
            }
        \hfill
        \subfloat[RNN - Shakespeare]{
            \includegraphics[width=0.3\textwidth]{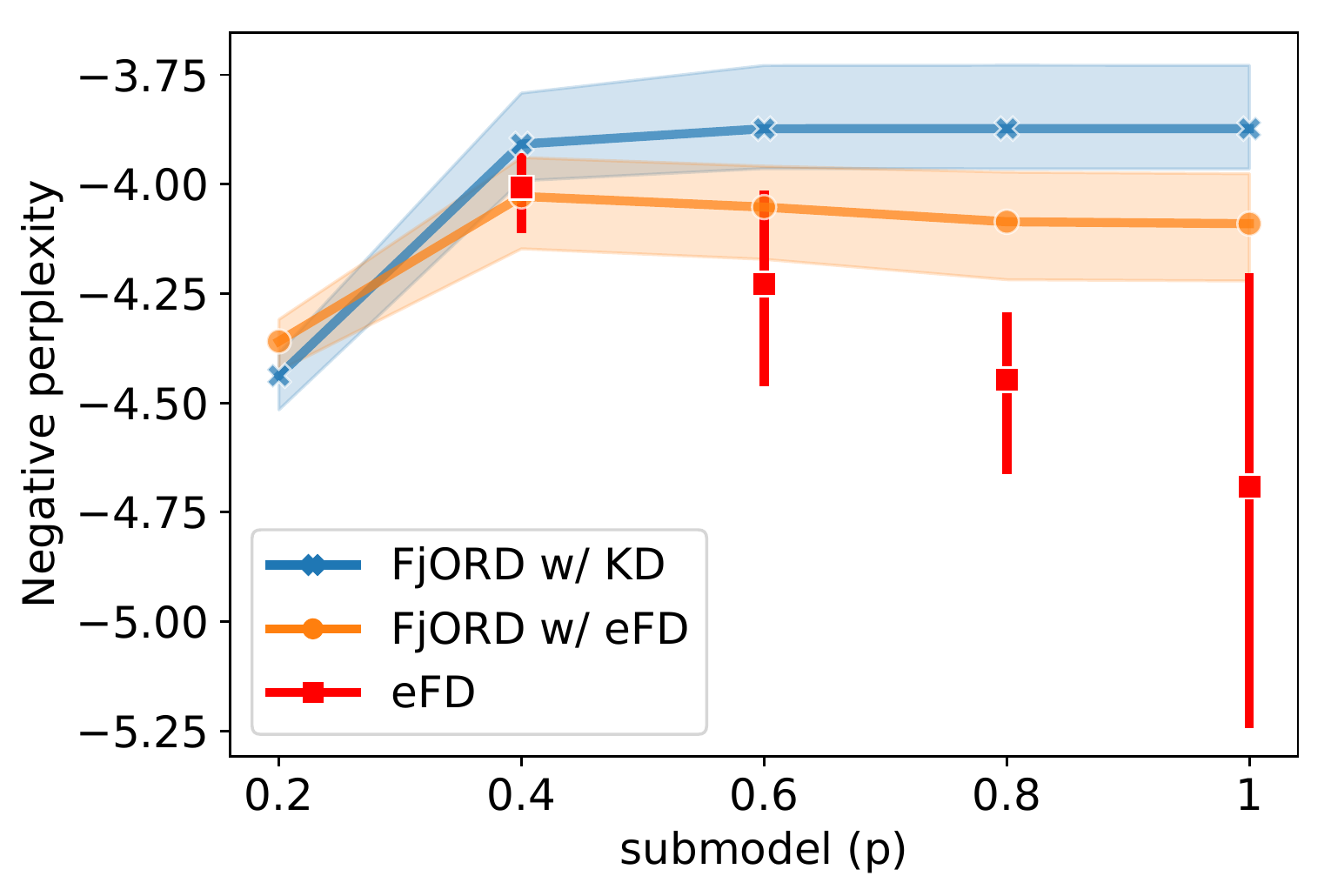}
            }
    \end{tabular}
    \vspace{-0.6em}
    \caption{Ordered Dropout with KD vs eFD baselines. Performance vs dropout rate \textit{p} across different networks and datasets. $\Dp = \U_5$}
    \label{fig:fl-perf-comparison}
\end{figure*}

\vspace{-1.2em}
\subsection{Performance Evaluation}
\label{sec:perf_eval}
\vspace{-0.6em}

In order to evaluate the performance of \tool, we compare it to the two baselines, eFD and OD+eFD. We consider the \textit{uniform-5} setup with drop scale of 1.0 (\textit{i.e.}~uniform clusters).
For each baseline, we train one independent model $\F_p$, end-to-end, for each $p$.
For eFD, what this translates to is that the clusters of devices that cannot run model $\F_p$ compensate by randomly dropping out neurons/filters. We point out that $p=0.2$ is omitted from the eFD results as it is essentially not employing any  dropout whatsoever.
For the case of \tool + eFD, we control the RD by capping it to $d=0.25$. This allows for larger submodels to be updated more often -- as device belonging to cluster $c$ can now have $\pcm \rightarrow p_{\M}^{c+1}$ during training where $c$$+$$1$ is the next more powerful cluster --
while at the same time it prevents the destructive effect of too high dropout values shown in the eFD baseline.
\vspace{-0.3em}

Fig.~\ref{fig:fl-perf-comparison} presents the achieved accuracy for varying values of $p$ across the three target datasets. 
\tool (denoted by \tool w/ KD) outperforms eFD across all datasets with improvements between $1.53$-$34.87$ percentage points (pp) ($19.22$ pp avg. across $p$ values) on CIFAR10, $1.57$-$6.27$ pp ($3.41$ pp avg.) on FEMNIST and $0.01$-$0.82$ points (p) ($0.46$ p avg.) on Shakespeare.
Compared to \tool+eFD, \tool achieves performance gains of $0.71$-$2.66$ pp ($1.79$ avg.), up to $2.56$ pp ($1.35$ pp avg.) on FEMNIST and $0.12$-$0.22$ p ($0.18$ p avg.) on Shakespeare.
\vspace{-0.3em}

Across all tasks, we observe that \tool is able to improve its performance with increasing $p$ due to the nested structure of its OD method. We also conclude that eFD on top of \tool does not seem to lead to better results.
More importantly though, given the heterogeneous pool of devices, to obtain the highest performing model for eFD, multiple models have to be trained (\textit{i.e.}~one per device cluster). For instance, the highest performing models for eFD are $\F_{0.4}$, $\F_{0.6}$ and $\F_{0.4}$ for CIFAR10, FEMNIST and Shakespeare respectively, which can be obtained only \textit{a posteriori}; after all model variants have been trained. Instead, despite the device heterogeneity, \tool requires a single training process that leads to a global model that significantly outperforms the best model of eFD (by $2.98$ and $2.73$ pp for CIFAR10 and FEMNIST, respectively, and $0.13$ p for Shakespeare), while allowing the direct, seamless extraction of submodels due to the nested structure of OD. Empirical evidence of the convergence of \tool and the corresponding baselines is provided in the Appendix.

\begin{figure*}[t!]
    \centering
    \begin{tabular}{ccc}
        \subfloat[ResNet18 - CIFAR10]{\includegraphics[width=0.3\textwidth]{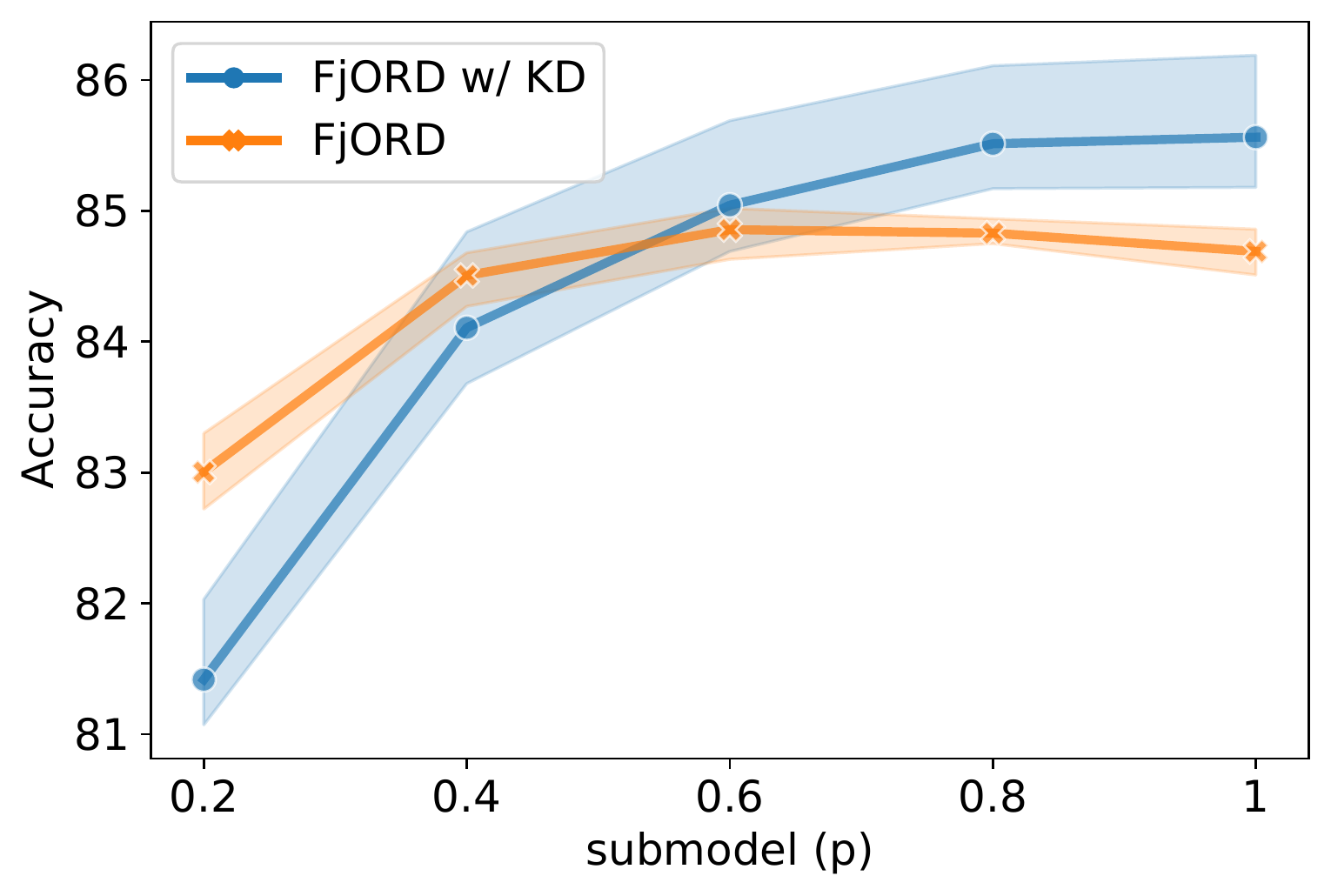} \label{fig:cifar10_scalability}}
        \hfill
        \subfloat[CNN - FEMNIST]
        {\includegraphics[width=0.3\textwidth]{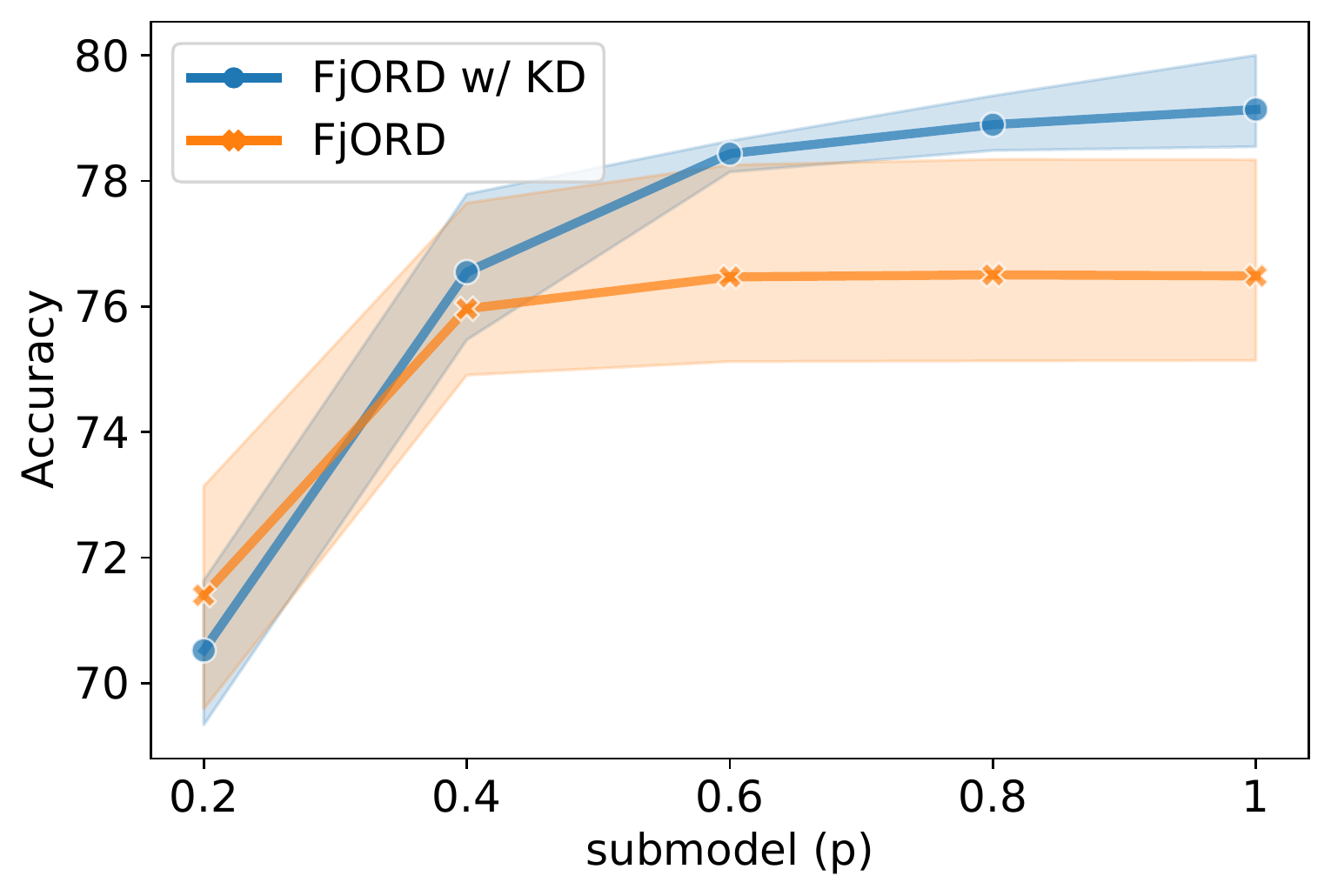}
        \label{fig:femnist_scalability}}
        \hfill
        \subfloat[RNN - Shakespeare]{
        \includegraphics[width=0.3\textwidth]{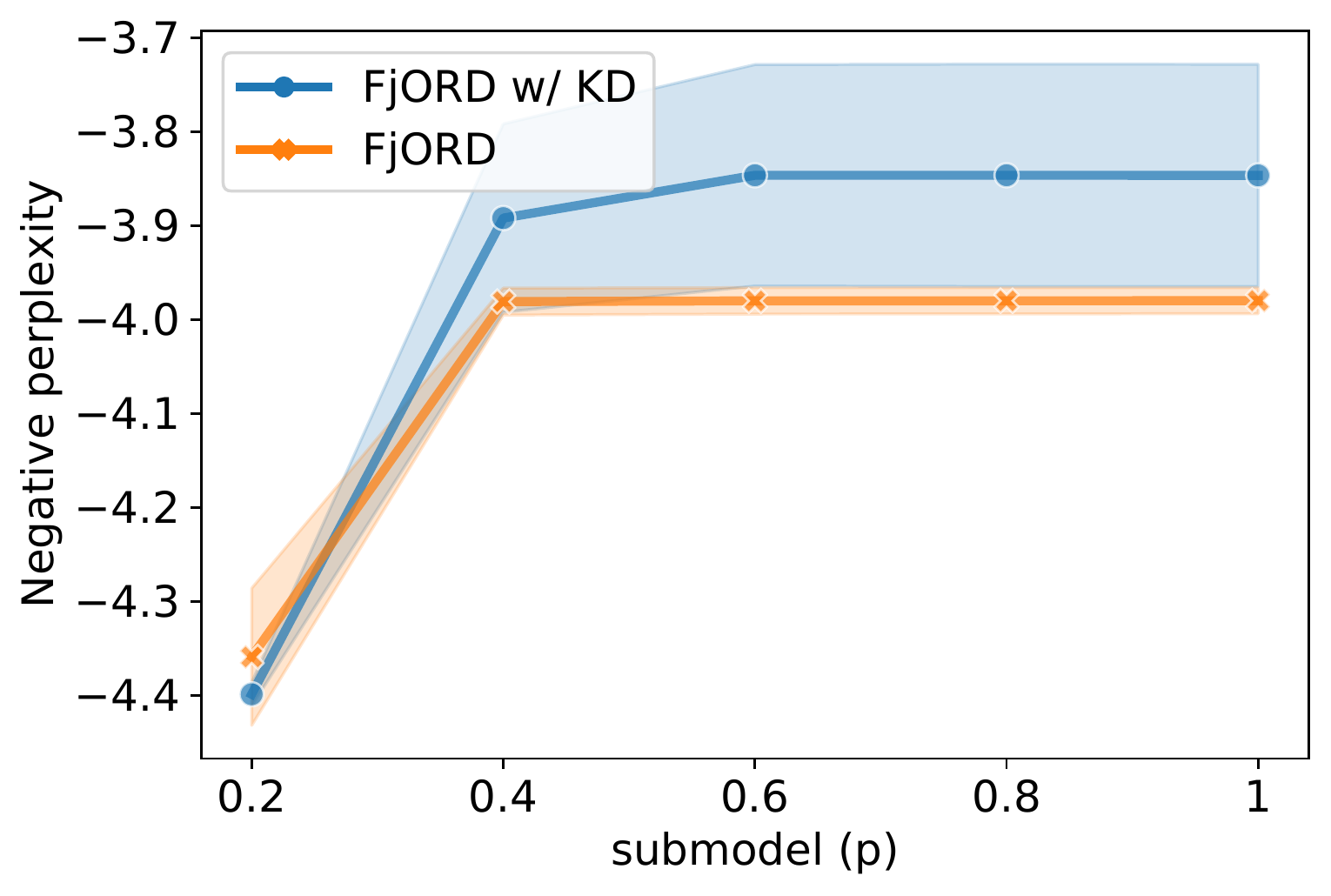}}
    \end{tabular}
    \caption{Ablation analysis of \tool with Knowledge Distillation. Ordered Dropout with $\Dp = \U_5$, KD - Knowledge distillation.}
    \label{fig:fl-ablation}
\end{figure*}

\vspace{-1em}
\subsection{Ablation Study of KD in \tool}
\label{sec:ablation}
\vspace{-0.6em}

To evaluate the contribution of our \textit{knowledge distillation} method to the attainable performance of \tool, we conduct an ablative analysis on all three datasets. We adopt the same setup of \textit{uniform-5} and $\text{drop scale}=1.0$ as in the previous section and compare \tool with and without KD.

Fig.~\ref{fig:fl-ablation} shows the efficacy of \tool's KD in FL settings. \tool's KD consistently improves the performance across all three datasets when $p > 0.4$, with average gains of $0.18, 0.68$ and $0.87$ pp for submodels of size $0.6, 0.8$ and $1$ on CIFAR-10, $1.96, 2.39$ and $2.65$ pp for FEMNIST and $0.10$ p for Shakespeare. 
For the cases of $p \le 0.4$, the impact of KD is fading.
We believe this to be a side-effect of optimising for the average accuracy across submodels, which also yielded the $T=\alpha=1$ strategy. We leave the exploration of alternative weighted KD strategies as future work. 
Overall, the use of KD significantly improves the performance of the global model, yielding gains of $0.71$ and $2.63$ pp for CIFAR10 and FEMNIST and $0.10$ p for Shakespeare.

\vspace{-1em}
\subsection{\tool's Deployment Flexibility}
\label{sec:flexibility}
\vspace{-0.6em}

\subsubsection{Device Clusters Scalability}
\label{sec:scalability}
\vspace{-0.6em}

An important characteristic of \tool is its ability to \textit{scale} to a larger number of device clusters or, equivalently, perform well with higher granularity of $p$ values.
To illustrate this, we test the performance of OD across two setups, \textit{uniform-5} and \textit{-10} (defined in \S~\ref{sec:exp_setup}).
\vspace{-0.3em}

As shown in Fig.~\ref{fig:non-fl-scalability}, \tool sustains its performance even under the higher granularity of $p$ values. This means that for applications where the modelling of clients needs to be more fine-grained, \tool can still be of great value, without any significant degradation in achieved accuracy per submodel. This further supports the use-case where device-load needs to be modelled explicitly in device clusters (\textit{e.g.}~modelling device capabilities and load with deciles).

\vspace{-1em}
\subsubsection{Adaptability to Device Distributions}
\label{sec:adaptability}
\vspace{-0.6em}

In this section, we make a similar case about \tool's \textit{elasticity} with respect to the allocation of available devices to each cluster. We adopt the setup of \textit{uniform-5} once again, but compare across drop scales $0.5$ and $1.0$ (defined in \S~\ref{sec:exp_setup}). In both cases, clients that can support models of {\small $\pim \in \{0.2, \dots, 0.8\}$} are equisized, but the former halves the percentage of devices and allocates it to the last (high-end) cluster, now accounting for 60\% of the devices. The rationale behind this is that the majority of participating devices are able to run the whole original model.
\vspace{-0.6em}

The results depicted in Fig.~\ref{fig:fl-adaptability} show that the larger submodels are expectedly more accurate, being updated more often. However, the same graphs also indicate that \tool does not significantly degrade the accuracy of the smaller submodels in the presence of more high-tier devices (\textit{i.e.}~$ds=0.5$). This is a direct consequence of sampling $p$ values during local rounds, instead of tying each tier with only the maximal submodel it can handle. %
We should also note that we did not alter the uniform sampling in this case on the premise that high-end devices are seen more often, precisely to illustrate \tool's adaptability to latent user device distribution changes of which the server may not be aware.

\begin{figure}[t]
    \vspace{-0.4cm}
    \centering
    \begin{minipage}{0.48\textwidth}
        \centering
        \begin{subfigure}[t]{.7\textwidth}
            \includegraphics[width=\textwidth]{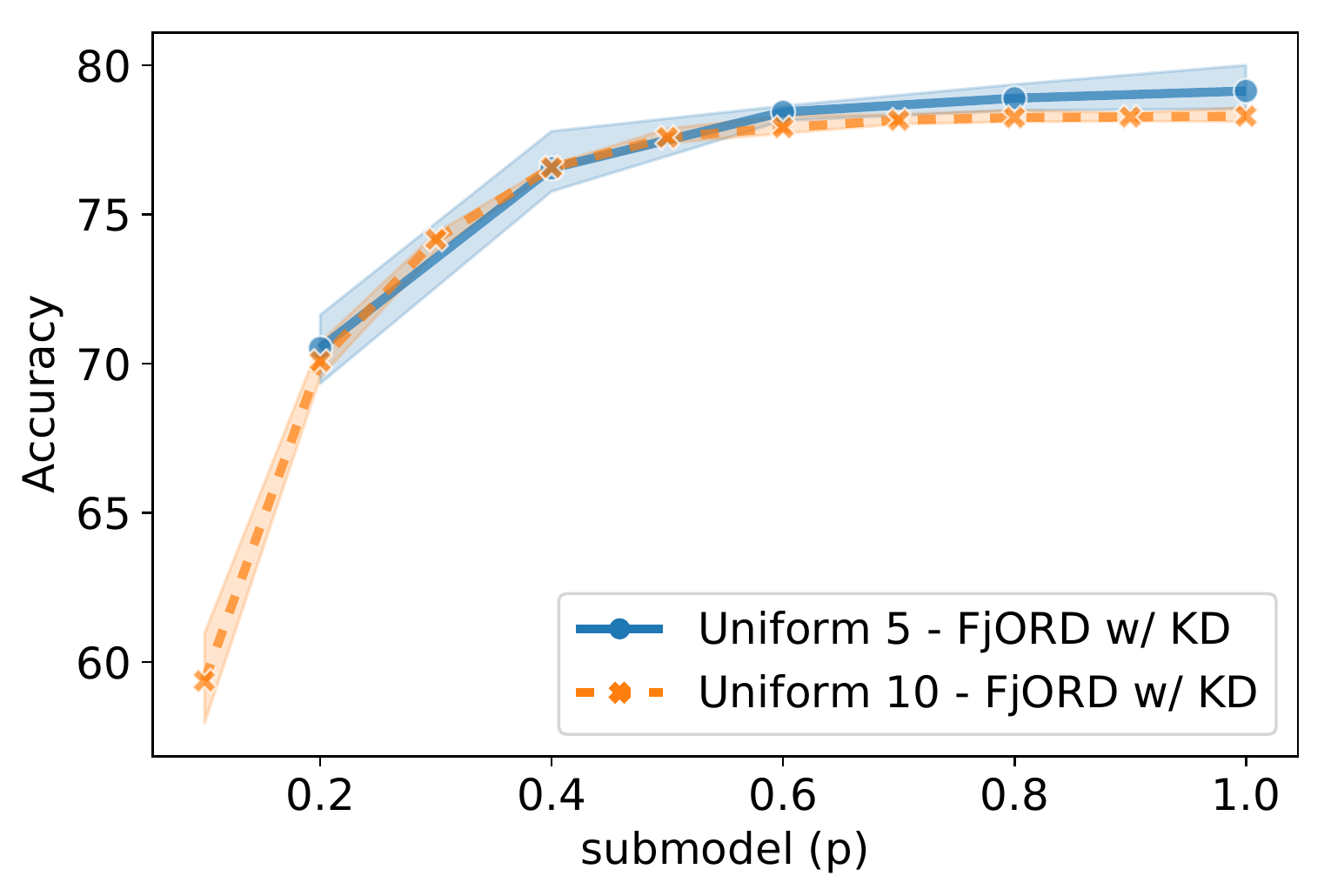}
            \caption{CNN - FEMNIST}
        \end{subfigure}
        \begin{subfigure}[t]{.7\textwidth}
            \includegraphics[width=\textwidth]{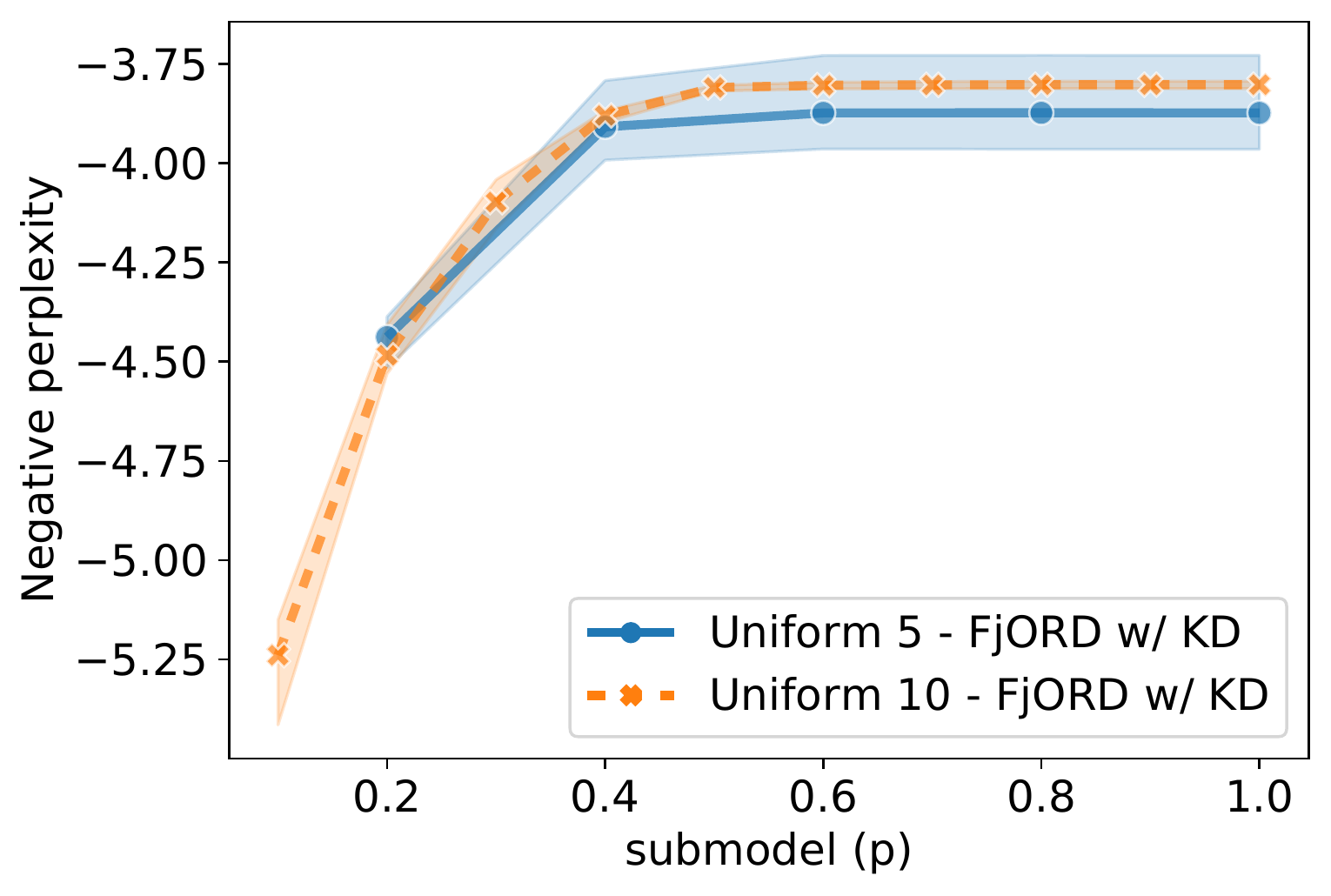}
            \caption{RNN - Shakespeare}
        \end{subfigure}
        \vspace{-0.6em}
        \caption{Demonstration of \tool's scalability with respect to the number of device clusters. %
        }
        \label{fig:non-fl-scalability}
    \end{minipage}\hfill
    \begin{minipage}{0.48\textwidth}
        \vspace{-1em}
        \centering
        \begin{subfigure}[t]{.7\textwidth}
            \includegraphics[width=\textwidth]{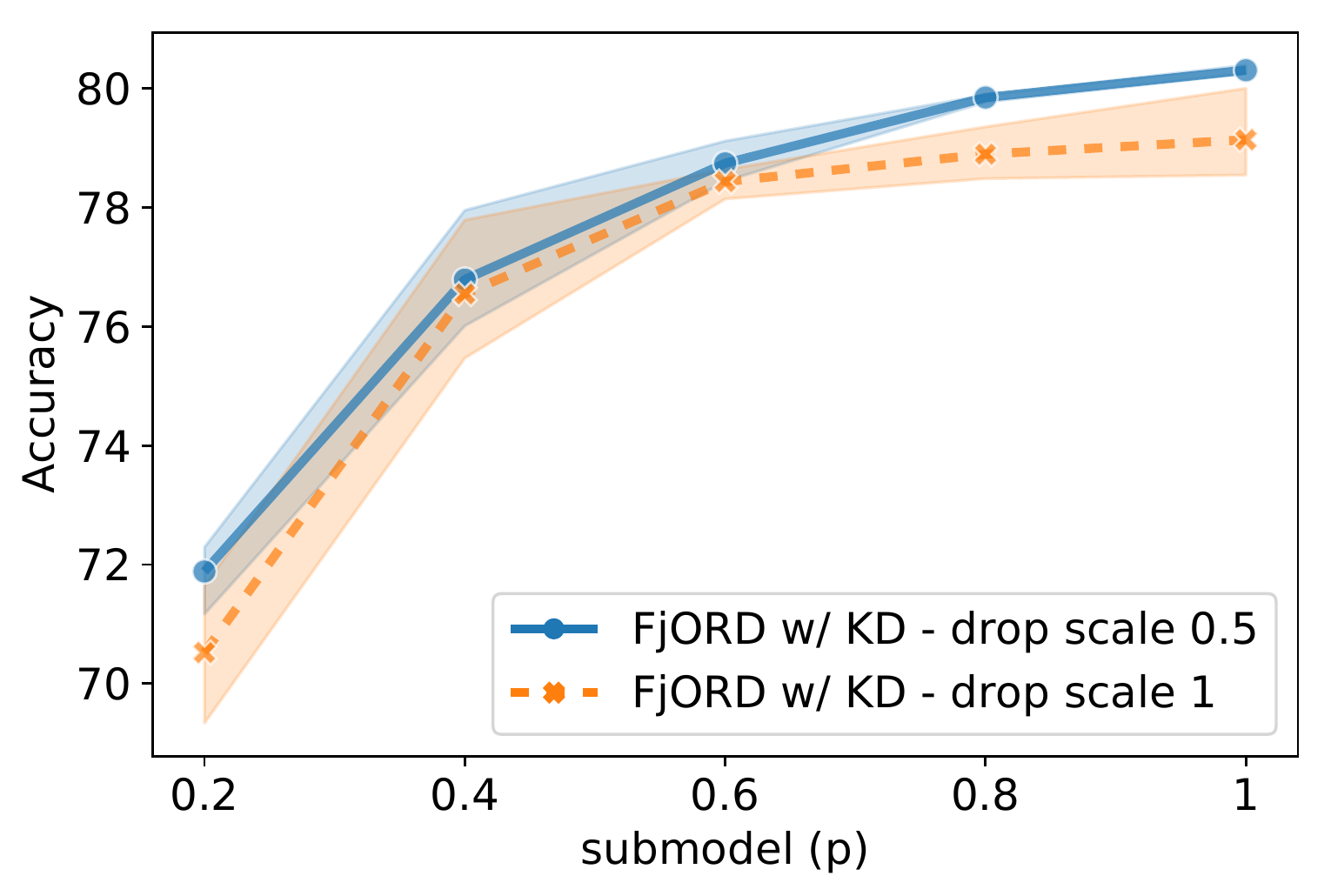}
            \caption{CNN - FEMNIST}
        \end{subfigure}
        \begin{subfigure}[t]{.7\textwidth}
           \includegraphics[width=\textwidth]{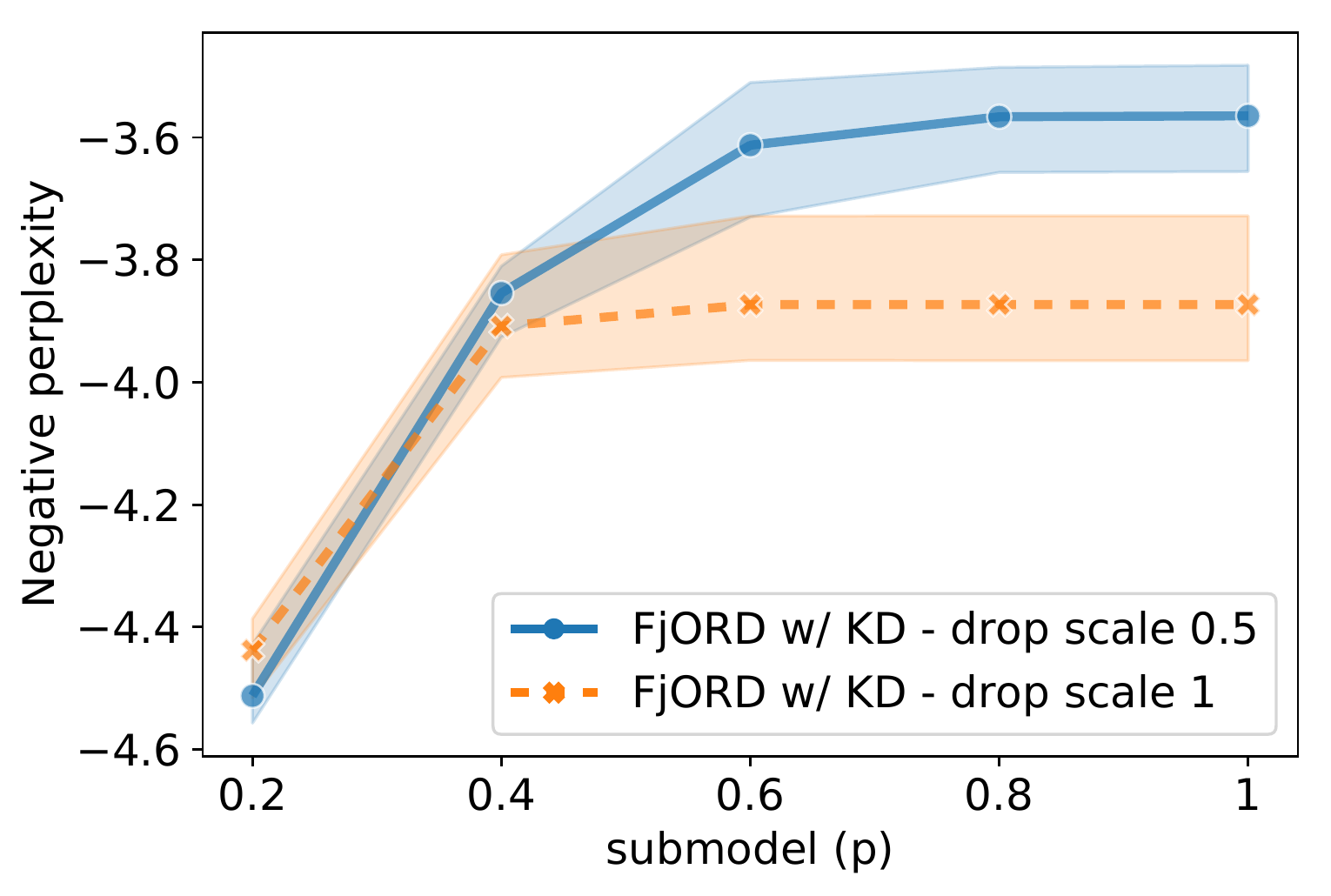}
            \caption{RNN - Shakespeare}
        \end{subfigure}
        \vspace{-0.6em}
        \caption{Demonstration of the adaptability of \tool across different device distributions.}
        \label{fig:fl-adaptability}
    \end{minipage}
\end{figure}

\vspace{-1.3em}
\section{Related Work}
\label{sec:related_work}
\vspace{-1.0 em}

\textbf{Dropout Techniques.}
Contrary to conventional Random Dropout~\cite{dropout2014jmlr}, which  stochastically drops a different, random set of a layer's units in every batch and is typically applied for regularisation purposes, OD employs a \textit{structured} ordered dropping scheme 
that aims primarily at tunably reducing the computational and memory cost of training and inference. However, OD can still have an implicit regularisation effect since  
we encourage learning towards the top-ranked units (\textit{e.g.}~the left-most units in the example of Fig.~\ref{fig:od}), as these units will be dropped less often during training. Respectively, at inference time, the load of a client can be dynamically adjusted by dropping the least important units, \textit{i.e.} adjusting the width of the network. 
\vspace{-0.3em}

To the best of our knowledge, the only similar technique to OD is Nested Dropout, where the authors proposed a similar construction, which is applied to the representation layer in autoencoders~\cite{rippel2014learning} in order to enforce identifiability of the learned representation or the last layer of the feature extractor~\cite{horvath2021hyperparameter} to learn an ordered set of
features for transfer learning. In our case, we apply OD to every layer to elastically adapt the computation and memory requirements during training and inference.
\vspace{-0.3em}

\textbf{Traditional Pruning.}
Conventional non-FL compression techniques can be applicable to reduce the network size and computation needs. The majority of pruning methods~\cite{pruning2015neurips,dnn_surgery2016neurips,pruning_filters2016iclr,snip2019iclr,molchanov2019importance} aim to generate a \textit{single} pruned model 
and require access to labelled data in order to perform a costly fine-tuning/calibration for \textit{each} pruned variant.
Instead, \tool's Ordered Dropout enables the deterministic extraction of \textit{multiple} pruned models with \textit{varying} resource budgets directly after training. In this manner, we remove both the excessive overhead of fine-tuning and the need for labelled data availability, which is crucial for real-world, privacy-aware applications~\cite{privacy_learning2012neurips,privacy_dl2015ccs}.
Finally, other model compression methods~\cite{nestdnn2018mobicom,haq2019cvpr,shrinkml2019interspeech} remain orthogonal to \tool.
\vspace{-0.3em}

\textbf{System Heterogeneity.}
So far, although substantial effort has been devoted to alleviating the \textit{statistical heterogeneity}~\cite{challenges_fl2020msp} among clients~\cite{smith2017federated,fedmd2019neuripsw,quagmire2020icml,personalised_fl2020neurips,fair_fl2020iclr}, the \textit{system heterogeneity} has largely remained unaddressed.  
Considering the diversity of client devices, techniques on client selection~\cite{clientsel_fl2019icc} and control of the per-round number of participating clients and local iterations~\cite{cost_eff_fl2021infocom,adaptivefl2019jsac} have been developed.
Nevertheless, as these schemes are restricted to allocate a uniform amount of work to each selected client, they either limit the model complexity to fit the lowest-end devices or exclude slow clients altogether.
From an aggregation viewpoint, 
\cite{fedprox2020mlsys} allows for partial results to be integrated to the global model, thus enabling the allocation of different amounts of work across heterogeneous clients. Despite the fact that each client is allowed to perform a different number of local iterations based on its resources, large models still cannot be accommodated on the more constrained devices. 
\vspace{-0.3em}

\textbf{Communication Optimisation.}
The majority of existing work has focused on tackling the communication overhead in FL.
\cite{comm_eff_fl2016neuripsw} proposed using structured and sketched updates to reduce the transmitted data.
ATOMO~\cite{atomo2018neurips} introduced a generalised gradient decomposition and sparsification technique, aiming to reduce the gradient sizes communicated upstream. %
\cite{adapt_grad_sparse_fl2020icdcs} adaptively select the gradients' sparsification degree based on the available bandwidth and computational power. 
Building upon gradient quantisation methods~\cite{deepgradcompress2018iclr, cnat2019arxiv, muppet2020icml, induced2021iclr},
\cite{quant_grad_fl2020arxiv} proposed using quantisation in the model sharing and %
aggregation steps. However, their scheme requires the \textit{same} clients to participate across all rounds, and is, thus, unsuitable for realistic settings where clients' availability cannot be guaranteed.
Despite the bandwidth savings, these communication-optimising approaches do not offer computational gains 
nor do they address device heterogeneity.
Nonetheless, they remain orthogonal to our work and can be complementarily combined to further alleviate the communication cost. 
\vspace{-0.3em}

\textbf{Computation-Communication Co-optimisation.}
A few works aim to co-optimise both the computational and bandwidth costs.
PruneFL~\cite{prunefl2020neuripsw} proposes an unstructured pruning method. Despite the similarity to our work in terms of pruning, this method assumes a \textit{common} pruned model across \textit{all} clients at a given round, thus not allowing more powerful devices to update more weights. %
Hence, the pruned model needs to meet the constraints of the least capable devices, which severely limits the model capacity. Moreover, the adopted unstructured sparsity is difficult to translate to processing speed gains~\cite{balancedsparse2019aaai}.
Federated Dropout~\cite{feddropout2018arxiv} randomly sparsifies the global model, before sharing it to the clients.
Similarly to PruneFL, Federated Dropout does not consider the system diversity and distributes the \textit{same} model \blue{size} to all clients. Thus, it is restricted by the low-end devices or excludes them altogether from the FL process. \blue{Additionally, Federated Dropout does not translate to computational benefits at inference time, since the whole model is deployed after federated training.}
\vspace{-0.3em}

Contrary to the presented works, our framework embraces the client heterogeneity, instead of treating it as a limitation, and thus pushes the boundaries of FL deployment in terms of fairness, scalability and performance by tailoring the model size to the device at hand, \blue{both at training and inference time, in a ``train-once-deploy-everywhere'' manner.}

\vspace{-1.3em}
\section{Conclusions \blue{\& Future Work}}
\label{sec:conclusions}
\vspace{-1 em}

In this work, we have introduced \tool, a federated learning method for heterogeneous device training. To this direction, \tool builds on top of our Ordered Dropout technique as a means to extract submodels of smaller footprints from a main model in a way where training the part also participates in training the whole. We show that our Ordered Dropout is equivalent to SVD for linear mappings and demonstrate that \tool's performance in the local and federated setting exceeds that of competing techniques, while maintaining flexibility across different environment setups.

\blue{In the future, we plan to investigate how \tool can be deployed and extended to future-gen devices and models in a life-long manner, the interplay between system and data heterogeneity for OD-based personalisation as well as alternative dynamic inference techniques for tackling system heterogeneity.}

\vspace{-1.3em}
\section*{Broader Impact}
\label{sec:societal_impact}
\vspace{-1em}

\blue{
Our work has a dual broader societal impact: i) on privacy and fairness in participation and ii) on the environment. 
On the one hand, centralised DNN training~\cite{fbdatacenter2018hpca} has been the norm for a long time, mainly facilitated by the advances in server-grade accelerator design and cheap storage. However, this paradigm comes with a set of disadvantages, both in terms of data privacy and energy consumption. With mobile and embedded devices becoming more capable and FL becoming a viable alternative~\cite{apple,sysdesign_fl2019mlsys}, one can leverage the free compute cycles of client devices to train models on-device, without data ever leaving the device premises. These devices, being typically battery-powered, operate under a more constrained power envelope compared to data-center accelerators~\cite{embench2019emdl}. Moreover, these devices are already deployed in the wild, but typically not used for training purposes.
What \tool contributes is the ability for even less capable devices to participate in the training process, thus increasing the representation of low-tier devices (and by extension the correlated demographic groups), as well as adding to the overall compute capabilities of the distributed system as a whole, potentially offsetting part of the carbon footprint of centralised training data centers~\cite{DBLP:journals/corr/abs-2010-06537}.}

\blue{However, moving the computation cost from the service provider to the user of the device is a non-negligible step and the user should be made aware what their device is used for, especially if they are contributing to the knowledge of a model they do not own. Moreover, while many large data centers \cite{fb_datecenter_eff,google_datecenter_eff} are increasingly dependent on renewable resources for meeting their power demands, this might not be the case for household electricity, which may impede the sustainability of training on device, at least in the short run.} 

\vspace{-1.3em}
\section*{Funding Disclosure}
\vspace{-1em}
This work was entirely performed at and funded by Samsung AI.

\bibliography{neurips_2021}

\begin{thebibliography}{10}

\bibitem{embench2019emdl}
Mario Almeida, Stefanos Laskaridis, Ilias Leontiadis, Stylianos~I. Venieris,
  and Nicholas~D. Lane.
\newblock {EmBench: Quantifying Performance Variations of Deep Neural Networks
  Across Modern Commodity Devices}.
\newblock In {\em The 3rd International Workshop on Deep Learning for Mobile
  Systems and Applications (EMDL)}, 2019.

\bibitem{quant_grad_fl2020arxiv}
Mohammad~Mohammadi Amiri, Deniz Gunduz, Sanjeev~R Kulkarni, and H~Vincent Poor.
\newblock {Federated Learning with Quantized Global Model Updates}.
\newblock {\em arXiv preprint arXiv:2006.10672}, 2020.

\bibitem{apple}
Apple.
\newblock {Learning with Privacy at Scale}.
\newblock In {\em Differential Privacy Team Technical Report}, 2017.

\bibitem{tff_data}
{Authors}.
\newblock {TensorFlow Federated Datasets}.
\newblock 2019.

\bibitem{backdoor_fl2020aistats}
Eugene Bagdasaryan, Andreas Veit, Yiqing Hua, Deborah Estrin, and Vitaly
  Shmatikov.
\newblock {How To Backdoor Federated Learning}.
\newblock In {\em Proceedings of the Twenty Third International Conference on
  Artificial Intelligence and Statistics (AISTATS)}, pages 2938--2948, 2020.

\bibitem{beutel2020flower}
Daniel~J Beutel, Taner Topal, Akhil Mathur, Xinchi Qiu, Titouan Parcollet, and
  Nicholas~D Lane.
\newblock {Flower: A Friendly Federated Learning Research Framework}.
\newblock {\em arXiv preprint arXiv:2007.14390}, 2020.

\bibitem{secure_aggreg2017ccs}
Keith Bonawitz et~al.
\newblock {Practical Secure Aggregation for Privacy-Preserving Machine
  Learning}.
\newblock In {\em Proceedings of the 2017 ACM SIGSAC Conference on Computer and
  Communications Security (CCS)}, 2017.

\bibitem{sysdesign_fl2019mlsys}
Keith Bonawitz et~al.
\newblock {Towards Federated Learning at Scale: System Design}.
\newblock In {\em Proceedings of Machine Learning and Systems (MLSys)}, 2019.

\bibitem{leaf}
Sebastian Caldas, Sai Meher~Karthik Duddu, Peter Wu, Tian Li, Jakub
  Kone{\v{c}}n{\`y}, H~Brendan McMahan, Virginia Smith, and Ameet Talwalkar.
\newblock Leaf: A benchmark for federated settings.
\newblock {\em arXiv preprint arXiv:1812.01097}, 2018.

\bibitem{feddropout2018arxiv}
Sebastian Caldas, Jakub Konečný, Brendan McMahan, and Ameet Talwalkar.
\newblock {Expanding the Reach of Federated Learning by Reducing Client
  Resource Requirements}.
\newblock In {\em NeurIPS Workshop on Federated Learning for Data Privacy and
  Confidentiality}, 2018.

\bibitem{emnist}
Gregory Cohen, Saeed Afshar, Jonathan Tapson, and Andre Van~Schaik.
\newblock Emnist: Extending mnist to handwritten letters.
\newblock In {\em 2017 International Joint Conference on Neural Networks
  (IJCNN)}, pages 2921--2926. IEEE, 2017.

\bibitem{shrinkml2019interspeech}
{\L}ukasz Dudziak, Mohamed~S Abdelfattah, Ravichander Vipperla, Stefanos
  Laskaridis, and Nicholas~D Lane.
\newblock {ShrinkML: End-to-End ASR Model Compression Using Reinforcement
  Learning}.
\newblock In {\em INTERSPEECH}, pages 2235--2239, 2019.

\bibitem{gdpr}
{European Commission}.
\newblock {GDPR: 2018 Reform of EU Data Protection Rules}.

\bibitem{fb_datecenter_eff}
Facebook.
\newblock Software, servers, systems, sensors, and science: Facebook’s recipe
  for hyperefficient data centers.
\newblock \url{https://tech.fb.com/hyperefficient-data-centers/}, 2021.
\newblock Accessed: \today.

\bibitem{personalised_fl2020neurips}
Alireza Fallah, Aryan Mokhtari, and Asuman Ozdaglar.
\newblock {Personalized Federated Learning with Theoretical Guarantees: A
  Model-Agnostic Meta-Learning Approach}.
\newblock {\em Advances in Neural Information Processing Systems (NeurIPS)},
  2020.

\bibitem{nestdnn2018mobicom}
Biyi Fang, Xiao Zeng, and Mi~Zhang.
\newblock {NestDNN: Resource-Aware Multi-Tenant On-Device Deep Learning for
  Continuous Mobile Vision}.
\newblock In {\em Proceedings of the 24th Annual International Conference on
  Mobile Computing and Networking (MobiCom)}, pages 115--127, 2018.

\bibitem{differentially2017neuripsw}
Robin~C. Geyer, Tassilo~J. Klein, and Moin Nabi.
\newblock {Differentially Private Federated Learning: A Client Level
  Perspective}.
\newblock In {\em NeurIPS Workshop on Machine Learning on the Phone and other
  Consumer Devices (MLPCD)}, 2017.

\bibitem{google_datecenter_eff}
Google.
\newblock Google datacenters efficiency.
\newblock \url{https://www.google.co.uk/about/datacenters/efficiency/}, 2021.
\newblock Accessed: \today.

\bibitem{dnn_surgery2016neurips}
Yiwen Guo, Anbang Yao, and Yurong Chen.
\newblock {Dynamic Network Surgery for Efficient DNNs}.
\newblock In {\em Advances in Neural Information Processing Systems (NeuriPS)},
  pages 1387--1395, 2016.

\bibitem{adapt_grad_sparse_fl2020icdcs}
Pengchao Han, Shiqiang Wang, and Kin~K Leung.
\newblock {Adaptive Gradient Sparsification for Efficient Federated Learning:
  An Online Learning Approach}.
\newblock In {\em IEEE International Conference on Distributed Computing
  Systems (ICDCS)}, 2020.

\bibitem{pruning2015neurips}
Song Han, Jeff Pool, John Tran, and William Dally.
\newblock {Learning both Weights and Connections for Efficient Neural Network}.
\newblock In {\em Advances in Neural Information Processing Systems (NeurIPS)},
  pages 1135--1143, 2015.

\bibitem{fbdatacenter2018hpca}
K.~{Hazelwood} et~al.
\newblock {Applied Machine Learning at Facebook: A Datacenter Infrastructure
  Perspective}.
\newblock In {\em IEEE International Symposium on High Performance Computer
  Architecture (HPCA)}, 2018.

\bibitem{resnet}
Kaiming He, Xiangyu Zhang, Shaoqing Ren, and Jian Sun.
\newblock {Deep Residual Learning for Image Recognition}.
\newblock In {\em Proceedings of the IEEE conference on computer vision and
  pattern recognition}, pages 770--778, 2016.

\bibitem{hinton2014distilling}
Geoffrey Hinton, Oriol Vinyals, and Jeff Dean.
\newblock {Distilling the Knowledge in a Neural Network}.
\newblock In {\em NeurIPS Deep Learning Workshop}, 2014.

\bibitem{lstm}
Sepp Hochreiter and J{\"u}rgen Schmidhuber.
\newblock Long short-term memory.
\newblock {\em Neural computation}, 9(8):1735--1780, 1997.

\bibitem{cnat2019arxiv}
Samuel Horv{\'a}th, Chen-Yu Ho, {\v L}udov{\'i}t Horv{\'a}th, Atal~Narayan
  Sahu, Marco Canini, and Peter Richt{\'a}rik.
\newblock {Natural Compression for Distributed Deep Learning}.
\newblock {\em arXiv preprint arXiv:1905.10988}, 2019.

\bibitem{horvath2021hyperparameter}
Samuel Horv{\'a}th, Aaron Klein, Peter Richt{\'a}rik, and C{\'e}dric
  Archambeau.
\newblock Hyperparameter transfer learning with adaptive complexity.
\newblock In {\em International Conference on Artificial Intelligence and
  Statistics}, pages 1378--1386. PMLR, 2021.

\bibitem{induced2021iclr}
Samuel Horv{\'a}th and Peter Richt{\'a}rik.
\newblock {A Better Alternative to Error Feedback for Communication-Efficient
  Distributed Learning}.
\newblock In {\em International Conference on Learning Representations}, 2021.

\bibitem{quagmire2020icml}
Kevin Hsieh, Amar Phanishayee, Onur Mutlu, and Phillip Gibbons.
\newblock {The Non-IID Data Quagmire of Decentralized Machine Learning}.
\newblock In {\em International Conference on Machine Learning (ICML)}, 2020.

\bibitem{diff_priv_fl2020jiot}
R.~{Hu}, Y.~{Guo}, H.~{Li}, Q.~{Pei}, and Y.~{Gong}.
\newblock {Personalized Federated Learning With Differential Privacy}.
\newblock {\em IEEE Internet of Things Journal (JIOT)}, 7(10):9530--9539, 2020.

\bibitem{ai_benchmark_2019}
Andrey Ignatov, Radu Timofte, Andrei Kulik, Seungsoo Yang, Ke~Wang, Felix Baum,
  Max Wu, Lirong Xu, and Luc Van~Gool.
\newblock {AI Benchmark: All About Deep Learning on Smartphones in 2019}.
\newblock In {\em International Conference on Computer Vision Workshops
  (ICCVW)}, 2019.

\bibitem{prunefl2020neuripsw}
Yuang Jiang, Shiqiang Wang, Bong~Jun Ko, Wei-Han Lee, and Leandros Tassiulas.
\newblock {Model Pruning Enables Efficient Federated Learning on Edge Devices}.
\newblock In {\em Workshop on Scalability, Privacy, and Security in Federated
  Learning (SpicyFL), NeurIPS}, 2020.

\bibitem{kairouz2019advances}
Peter Kairouz, H~Brendan McMahan, Brendan Avent, Aur{\'e}lien Bellet, Mehdi
  Bennis, Arjun~Nitin Bhagoji, Keith Bonawitz, Zachary Charles, Graham Cormode,
  Rachel Cummings, et~al.
\newblock Advances and open problems in federated learning.
\newblock {\em arXiv preprint arXiv:1912.04977}, 2019.

\bibitem{scaffold2020icml}
Sai~Praneeth Karimireddy, Satyen Kale, Mehryar Mohri, Sashank Reddi, Sebastian
  Stich, and Ananda~Theertha Suresh.
\newblock {SCAFFOLD: Stochastic Controlled Averaging for Federated Learning}.
\newblock In {\em International Conference on Machine Learning (ICML)}, 2020.

\bibitem{comm_eff_fl2016neuripsw}
Jakub Konečný, H.~Brendan McMahan, Felix~X. Yu, Peter Richtarik,
  Ananda~Theertha Suresh, and Dave Bacon.
\newblock {Federated Learning: Strategies for Improving Communication
  Efficiency}.
\newblock In {\em NeurIPS Workshop on Private Multi-Party Machine Learning},
  2016.

\bibitem{cifar}
Alex Krizhevsky, Geoffrey Hinton, et~al.
\newblock Learning multiple layers of features from tiny images.
\newblock 2009.

\bibitem{hapi2020iccad}
Stefanos Laskaridis, Stylianos~I. Venieris, Hyeji Kim, and Nicholas~D. Lane.
\newblock {HAPI: Hardware-Aware Progressive Inference}.
\newblock In {\em International Conference on Computer-Aided Design (ICCAD)},
  2020.

\bibitem{snip2019iclr}
Namhoon Lee, Thalaiyasingam Ajanthan, and Philip Torr.
\newblock {SNIP: Single-Shot Network Pruning based on Connection Sensitivity}.
\newblock In {\em International Conference on Learning Representations (ICLR)},
  2019.

\bibitem{fedmd2019neuripsw}
Daliang Li and Junpu Wang.
\newblock {FedMD: Heterogenous Federated Learning via Model Distillation}.
\newblock In {\em NeurIPS 2019 Workshop on Federated Learning for Data Privacy
  and Confidentiality}, 2019.

\bibitem{pruning_filters2016iclr}
Hao Li, Asim Kadav, Igor Durdanovic, Hanan Samet, and Hans~Peter Graf.
\newblock {Pruning Filters for Efficient ConvNets}.
\newblock In {\em International Conference on Learning Representations (ICLR)},
  2016.

\bibitem{challenges_fl2020msp}
Tian Li, Anit~Kumar Sahu, Ameet Talwalkar, and Virginia Smith.
\newblock {Federated Learning: Challenges, Methods, and Future Directions}.
\newblock {\em IEEE Signal Processing Magazine}, 2020.

\bibitem{fedprox2020mlsys}
Tian Li, Anit~Kumar Sahu, Manzil Zaheer, Maziar Sanjabi, Ameet Talwalkar, and
  Virginia Smith.
\newblock {Federated Optimization in Heterogeneous Networks}.
\newblock In {\em Proceedings of Machine Learning and Systems (MLSys)}, 2020.

\bibitem{fair_fl2020iclr}
Tian Li, Maziar Sanjabi, Ahmad Beirami, and Virginia Smith.
\newblock {Fair Resource Allocation in Federated Learning}.
\newblock In {\em International Conference on Learning Representations (ICLR)},
  2020.

\bibitem{li2021fedbn}
Xiaoxiao Li, Meirui JIANG, Xiaofei Zhang, Michael Kamp, and Qi~Dou.
\newblock {Fed{\{}BN{\}}: Federated Learning on Non-{\{}IID{\}} Features via
  Local Batch Normalization}.
\newblock In {\em International Conference on Learning Representations (ICLR)},
  2021.

\bibitem{liang2019think}
Paul~Pu Liang, Terrance Liu, Liu Ziyin, Nicholas~B Allen, Randy~P Auerbach,
  David Brent, Ruslan Salakhutdinov, and Louis-Philippe Morency.
\newblock {Think Locally, Act Globally: Federated Learning with Local and
  Global Representations}.
\newblock In {\em NeurIPS 2019 Workshop on Federated Learning}, 2019.

\bibitem{starfish2015mobisys}
Robert LiKamWa and Lin Zhong.
\newblock {Starfish: Efficient Concurrency Support for Computer Vision
  Applications}.
\newblock In {\em Proceedings of the 13th Annual International Conference on
  Mobile Systems, Applications, and Services (MobiSys)}, pages 213--226, 2015.

\bibitem{deepgradcompress2018iclr}
Yujun Lin, Song Han, Huizi Mao, Yu~Wang, and Bill Dally.
\newblock {Deep Gradient Compression: Reducing the Communication Bandwidth for
  Distributed Training}.
\newblock In {\em International Conference on Learning Representations (ICLR)},
  2018.

\bibitem{cost_eff_fl2021infocom}
Bing Luo, Xiang Li, Shiqiang Wang, Jianwei Huang, and Leandros Tassiulas.
\newblock {Cost-Effective Federated Learning Design}.
\newblock In {\em INFOCOM}, 2021.

\bibitem{fedavg2017aistats}
Brendan McMahan, Eider Moore, Daniel Ramage, Seth Hampson, and Blaise~Aguera
  y~Arcas.
\newblock {Communication-Efficient Learning of Deep Networks from Decentralized
  Data}.
\newblock In {\em Proceedings of the 20th International Conference on
  Artificial Intelligence and Statistics (AISTATS)}, 2017.

\bibitem{differentially2018iclr}
H.~Brendan McMahan, Daniel Ramage, Kunal Talwar, and Li~Zhang.
\newblock {Learning Differentially Private Recurrent Language Models}.
\newblock In {\em International Conference on Learning Representations (ICLR)},
  2018.

\bibitem{feature_leak2019sp}
Luca Melis, Congzheng Song, Emiliano De~Cristofaro, and Vitaly Shmatikov.
\newblock {Exploiting Unintended Feature Leakage in Collaborative Learning}.
\newblock In {\em IEEE Symposium on Security and Privacy (SP)}, pages 691--706,
  2019.

\bibitem{molchanov2019importance}
Pavlo Molchanov, Arun Mallya, Stephen Tyree, Iuri Frosio, and Jan Kautz.
\newblock {Importance Estimation for Neural Network Pruning}.
\newblock In {\em Proceedings of the IEEE Conference on Computer Vision and
  Pattern Recognition (CVPR)}, pages 11264--11272, 2019.

\bibitem{clientsel_fl2019icc}
Takayuki Nishio and Ryo Yonetani.
\newblock {Client Selection for Federated Learning with Heterogeneous Resources
  in Mobile Edge}.
\newblock In {\em IEEE International Conference on Communications (ICC)}, 2019.

\bibitem{pytorch2019}
Adam Paszke, Sam Gross, Francisco Massa, Adam Lerer, James Bradbury, Gregory
  Chanan, Trevor Killeen, Zeming Lin, Natalia Gimelshein, Luca Antiga, Alban
  Desmaison, Andreas Kopf, Edward Yang, Zachary DeVito, Martin Raison, Alykhan
  Tejani, Sasank Chilamkurthy, Benoit Steiner, Lu~Fang, Junjie Bai, and Soumith
  Chintala.
\newblock {PyTorch: An Imperative Style, High-Performance Deep Learning
  Library}.
\newblock In {\em Advances in Neural Information Processing Systems (NeurIPS)},
  pages 8026--8037, 2019.

\bibitem{DBLP:journals/corr/abs-2010-06537}
Xinchi Qiu, Titouan Parcollet, Daniel~J. Beutel, Taner Topal, Akhil Mathur, and
  Nicholas~D. Lane.
\newblock A first look into the carbon footprint of federated learning.
\newblock {\em CoRR}, abs/2010.06537, 2020.

\bibitem{muppet2020icml}
Aditya Rajagopal, Diederik Vink, Stylianos Venieris, and Christos-Savvas
  Bouganis.
\newblock {Multi-Precision Policy Enforced Training ({M}u{PPET}) : A
  Precision-Switching Strategy for Quantised Fixed-Point Training of {CNN}s}.
\newblock In {\em Proceedings of the 37th International Conference on Machine
  Learning (ICML)}, pages 7943--7952, 2020.

\bibitem{rippel2014learning}
Oren Rippel, Michael Gelbart, and Ryan Adams.
\newblock {Learning Ordered Representations with Nested Dropout}.
\newblock In {\em International Conference on Machine Learning (ICML)}, pages
  1746--1754, 2014.

\bibitem{comms_fl2020tnnls}
F.~{Sattler}, S.~{Wiedemann}, K.~R. {Müller}, and W.~{Samek}.
\newblock {Robust and Communication-Efficient Federated Learning From
  Non-i.i.d. Data}.
\newblock {\em IEEE Transactions on Neural Networks and Learning Systems
  (TNNLS)}, 31(9):3400--3413, 2020.

\bibitem{privacy_dl2015ccs}
Reza Shokri and Vitaly Shmatikov.
\newblock {Privacy-Preserving Deep Learning}.
\newblock In {\em Proceedings of the 22nd ACM SIGSAC Conference on Computer and
  Communications Security (CCS)}, pages 1310--1321, 2015.

\bibitem{singh2020model}
Sidak~Pal Singh and Martin Jaggi.
\newblock {Model Fusion via Optimal Transport}.
\newblock {\em Advances in Neural Information Processing Systems (NeurIPS)},
  33, 2020.

\bibitem{smith2017federated}
Virginia Smith, Chao-Kai Chiang, Maziar Sanjabi, and Ameet~S Talwalkar.
\newblock {Federated Multi-Task Learning}.
\newblock In {\em Advances in Neural Information Processing Systems (NeurIPS)},
  2017.

\bibitem{dropout2014jmlr}
Nitish Srivastava, Geoffrey Hinton, Alex Krizhevsky, Ilya Sutskever, and Ruslan
  Salakhutdinov.
\newblock {Dropout: A Simple Way to Prevent Neural Networks from Overfitting}.
\newblock {\em Journal of Machine Learning Research (JMLR)}, 15(56):1929--1958,
  2014.

\bibitem{privacy_learning2012neurips}
Martin~J Wainwright, Michael Jordan, and John~C Duchi.
\newblock {Privacy Aware Learning}.
\newblock In {\em Advances in Neural Information Processing Systems (NeurIPS)},
  2012.

\bibitem{atomo2018neurips}
Hongyi Wang, Scott Sievert, Shengchao Liu, Zachary Charles, Dimitris
  Papailiopoulos, and Stephen Wright.
\newblock {ATOMO: Communication-Efficient Learning via Atomic Sparsification}.
\newblock {\em Advances in Neural Information Processing Systems (NeurIPS)},
  2018.

\bibitem{wang2020federated}
Hongyi Wang, Mikhail Yurochkin, Yuekai Sun, Dimitris Papailiopoulos, and
  Yasaman Khazaeni.
\newblock {Federated Learning with Matched Averaging}.
\newblock In {\em International Conference on Learning Representations (ICLR)},
  2020.

\bibitem{fednova2020neurips}
Jianyu Wang, Qinghua Liu, Hao Liang, Gauri Joshi, and H~Vincent Poor.
\newblock {Tackling the Objective Inconsistency Problem in Heterogeneous
  Federated Optimization}.
\newblock {\em Advances in Neural Information Processing Systems (NeurIPS)},
  2020.

\bibitem{haq2019cvpr}
Kuan Wang, Zhijian Liu, Yujun Lin, Ji~Lin, and Song Han.
\newblock {HAQ: Hardware-Aware Automated Quantization with Mixed Precision}.
\newblock In {\em Proceedings of the IEEE conference on Computer Vision and
  Pattern Recognition (CVPR)}, pages 8612--8620, 2019.

\bibitem{adaptivefl2019jsac}
Shiqiang Wang, Tiffany Tuor, Theodoros Salonidis, Kin~K Leung, Christian
  Makaya, Ting He, and Kevin Chan.
\newblock {Adaptive Federated Learning in Resource Constrained Edge Computing
  Systems}.
\newblock {\em IEEE Journal on Selected Areas in Communications (JSAC)}, 37(6),
  2019.

\bibitem{facebook2019hpca}
C.~{Wu} et~al.
\newblock {Machine Learning at Facebook: Understanding Inference at the Edge}.
\newblock In {\em IEEE International Symposium on High Performance Computer
  Architecture (HPCA)}, 2019.

\bibitem{balancedsparse2019aaai}
Zhuliang Yao, Shijie Cao, Wencong Xiao, Chen Zhang, and Lanshun Nie.
\newblock {Balanced Sparsity for Efficient DNN Inference on GPU}.
\newblock In {\em AAAI Conference on Artificial Intelligence (AAAI)},
  volume~33, pages 5676--5683, 2019.

\bibitem{yurochkin2019bayesian}
Mikhail Yurochkin, Mayank Agarwal, Soumya Ghosh, Kristjan Greenewald, Nghia
  Hoang, and Yasaman Khazaeni.
\newblock {Bayesian Nonparametric Federated Learning of Neural Networks}.
\newblock In {\em International Conference on Machine Learning (ICML)}, pages
  7252--7261. PMLR, 2019.

\bibitem{nas2017iclr}
Barret Zoph and Quoc Le.
\newblock {Neural Architecture Search with Reinforcement Learning}.
\newblock In {\em International Conference on Learning Representations (ICLR)},
  2017.

\end{thebibliography}
\bibliographystyle{plain}

\newpage
\onecolumn

\appendix

\paragraph*{\Large Supplementary Material}

\section{Proof of Theorem~\ref{thm:od_is_svd}}
\begin{proof}
Let $A =\Hat{U}\Sigma\Hat{V}^\top = \sum_{i=1}^k \sigma_i \hat{u}_i \hat{v}_i^\top$ denote SVD decomposition of $A$. This decomposition is unique as $A$ is full rank with distinct singular values. We also denote $A_b = \sum_{i=1}^b \sigma_i \hat{u}_i \hat{v}_i^\top$

Assuming a linkage of input data $x$ and response $y$ through a linear mapping $Ax=y$, 
we obtain the following
\begin{align*}
    \min_{U,V} \E_{x \sim \mathcal{X}} \E_{p \sim \Dp}\|\F_p(x) - y \|^2 = \min_{U,V} \E_{x \sim \mathcal{X}} \E_{p \sim \Dp}\|\F_p(x) - Ax \|^2.
\end{align*}

Let us denote $u_i$ to be $i$-th row of matrix U and $v_i^\top$ to be $i$-th row of V.  Due to $\mathcal{X}$ being uniform on unit ball and structure of the neural network, we can further simplify the objective to
\begin{align*}
    \min_{U,V} \E_{p \sim \Dp}\left\| \sum_{i=1}^{\ceil{ p\cdot k}}u_i v_i^\top - A \right\|_F^2,
\end{align*}
 where $F$ denotes Frobenius norm. Since for each $b$ there exists nonzero probability $\mathbf{P}_b$ such that $b = \ceil{ p\cdot k} $, we can explicitly compute expectation, which leads to 
 \begin{align*}
    \min_{U,V} \sum_{b=1}^k \bP_b \left\| \sum_{i=1}^{b}u_i v_i^\top - A \right\|_F^2.
\end{align*}
 
 Realising that $\sum_{i=1}^{b}u_i v_i^\top$ has rank at most $b$, we can use Eckart–Young theorem which implies that
 \begin{align*}
    \min_{U,V} \sum_{b=1}^k \bP_b \left\| \sum_{i=1}^{b}u_i v_i^\top - A \right\|_F^2 \geq \sum_{b=1}^k \bP_b \left\| A_b - A \right\|_F^2.
\end{align*}
 
 Equality is obtained if and only if $ A_b =  \sum_{i=1}^{b}u_i v_i^\top$ for all $i \in \{1, 2, \dots, k\}$. This can be achieved, \textit{e.g.}~$v_i = \hat{v}_i$ and $u_i = \sigma_i \hat{u}_i$ for all $i \in \{1, 2, \dots, k\}$.
\end{proof}

\section{OD: Optimization Perspective}
In this section, we discuss the impact of introducing the Ordered Dropout formulation into the original problem. We follow notation used in the main paper, where $\F$ and $\w$ are the original network and weights, respectively, and $\F_p$ denotes a pruned $p$-subnetwork with its weights $\w_p$. We argue that the problem does not become harder from the optimization point of view as quantities such as smoothness or strong convexity do not worsen for the OD formulation as stated in the following lemma.

\begin{lemma}
Let $\F$ be $\mu$-strongly convex and $L$-smooth. Then $\F_{\Dp} = \E_{p \sim \Dp}[\F_p]$ is $\mu'$-strongly convex and $L'$-smooth with $\mu' \geq \mu$ and $L' \leq L$.
\end{lemma}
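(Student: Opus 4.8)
The plan is to reduce the claim to two elementary facts, working in the optimization-perspective setting where $\F$ denotes the loss as a function of the weights $\w$: (i) each pruned loss $\F_p$ is a \emph{restriction} of $\F$ to the coordinate subspace $W_p$ of the weights retained by the $p$-subnetwork, and restricting can only improve conditioning; and (ii) strong convexity and smoothness propagate through the expectation $\E_{p\sim\Dp}$ in the favourable direction.

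First I would make the relationship between $\F_p$ and $\F$ explicit. Pruning to the $p$-subnetwork amounts to feeding the masked weights $D_p\w$ into $\F$, where $D_p$ is the diagonal $0/1$ projection onto the active coordinates; equivalently $\F_p$ is the restriction of $\F$ to $W_p$, regarded as a function of the retained weights $\w_p$. I would then invoke the restriction principle for strongly convex and smooth functions: starting from the defining inequalities $\F(y) \ge \F(x) + \langle\nabla\F(x), y-x\rangle + \frac{\mu}{2}\|y-x\|^2$ and the analogous $L$-smoothness upper bound, and specialising $x,y$ to $W_p$, one obtains that $\F_p$ is $\mu_p$-strongly convex and $L_p$-smooth with $\mu_p \ge \mu$ and $L_p \le L$. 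Since the norm is unchanged on the subspace, both constants can only move in the stated direction, and the argument needs no twice-differentiability.

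Next I would pass from the individual $\F_p$ to their expectation. Using the second-order characterisation $\nabla^2 \F_{\Dp} = \E_{p\sim\Dp}[\nabla^2\F_p]$ together with the concavity of $\lambda_{\min}(\cdot)$ and convexity of $\lambda_{\max}(\cdot)$ on symmetric matrices, Jensen's inequality gives $\lambda_{\min}(\E_p[\nabla^2\F_p]) \ge \E_p[\lambda_{\min}(\nabla^2\F_p)]$ and $\lambda_{\max}(\E_p[\nabla^2\F_p]) \le \E_p[\lambda_{\max}(\nabla^2\F_p)]$, so that the per-submodel bounds $\mu_p \ge \mu$ and $L_p \le L$ lift to $\mu' \ge \mu$ and $L' \le L$. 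Alternatively, I would average the defining convexity and smoothness inequalities directly, which sidesteps differentiability altogether.

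The main obstacle is reconciling the different domains. The per-submodel inequalities only control directions lying inside $W_p$, so once the $\F_p$ are placed on the common full weight space the masked Hessian $D_p \nabla^2\F\, D_p$ vanishes along pruned coordinates; a naive average then yields only $\mu' \ge \mu\min_j q_j$ with $q_j = \bP(\text{coordinate }j\text{ is retained})$, rather than $\mu' \ge \mu$. The delicate step is therefore to argue the strong-convexity bound block-wise on the reduced spaces and to exploit the nested structure $p_1<p_2 \Rightarrow \F_{p_1}\subset\F_{p_2}$, under which the maximal $p$ activates every coordinate; I would expect either to state the constants per submodel or to add the mild nondegeneracy that every coordinate is retained by the maximal model, and this is where the real work of the proof lies.
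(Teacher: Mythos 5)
Your approach is the same as the paper's: first show that each restricted loss $\F_p$ inherits the constants ($\mu_p \ge \mu$, $L_p \le L$) from $\F$ by specialising the defining strong-convexity and smoothness inequalities to the retained coordinate subspace, then pass to the expectation over $p$. The first step matches the paper's argument exactly and is correct. For the second step the paper simply writes that ``the same has to hold for the expectation,'' which would be valid only if all the $\F_p$ were $\mu$-strongly convex and $L$-smooth \emph{on a common domain} (convex combinations preserve both properties with the same constants). The obstacle you raise --- that the per-submodel bounds only control directions inside the retained subspace $W_p$, so that once the $\F_p$ are placed on the full weight space the masked Hessian $D_p\nabla^2\F\,D_p$ is singular along pruned coordinates --- is precisely the point the paper's proof glosses over. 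You have correctly located a gap rather than introduced one.

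That said, your proposed repairs do not close it. Requiring that the maximal model retain every coordinate does not help: a coordinate $j$ retained only by submodels of total probability $q_j<1$ still contributes only $\mu q_j$ of curvature to the average. The quadratic $\F(\w)=\frac{\mu}{2}\|\w\|^2$ makes this tight: $\E_{p}\left[\F(D_p\w)\right]=\frac{\mu}{2}\sum_j q_j w_j^2$ is exactly $\mu\min_j q_j$-strongly convex, so under the natural full-space reading the strong-convexity half of the claim fails as stated, while the smoothness half survives since $\E_{p}\left[D_p\nabla^2\F\,D_p\right]\preceq L\,\E_{p}[D_p]\preceq L I$. Your Jensen route via concavity of $\lambda_{\min}$ suffers from the same domain issue (it yields only $\mu'\ge 0$ once the masked Hessians live on the full space), as you yourself note. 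The honest options are the ones you gesture at: either state the constants per submodel, restricted to $W_p$ (which is arguably what the paper intends), or accept the weaker but correct bound $\mu'\ge\mu\min_j q_j$, which still shows the OD objective is no harder to optimise up to a distribution-dependent constant.
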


\begin{proof}
The claim trivially follows from the definitions of smoothness and strong convexity by realising that $\F_p(\w_p)$ is equal to $\F(\w)$ where $\w_p$ is obtained from $\w$ using our pruning technique, thus $\F_p$ is $\mu_p$-strongly convex and $L_p$-smooth with $\mu_p \geq \mu$ and $L_p \leq L$ for all $p \in [0,1]$. Subsequently, the same has to hold for the expectation.
\end{proof}

\section{Experimental Details}
\label{sec:app_experimental_details}

\subsection{Datasets and Models}
Below, we provide detailed description of the datasets and models used in this paper. We use vision datasets {EMNIST}~\cite{emnist} and its federated equivalent FEMNIST and {CIFAR10}~\cite{cifar}, as well as the language modelling dataset {Shakespeare}~\cite{fedavg2017aistats}. In the centralised training scenarios, we use the union of dataset partitions for training and validation, while in the federeated deployment, we adopt either a random partitioning in IID datasets or the pre-partitioned scheme available in TensorFlow Federated (TFF)~\cite{tff_data}. 
Detailed description of the datasets is provided below.

\textbf{CIFAR10.} The CIFAR10 dataset is a computer vision dataset consisting of $32 \times 32 \times 3$ images with $10$ possible labels. For federated version of {CIFAR10}, we randomly partition dataset among $100$ clients, each client having $500$ data-points. 
We train a ResNet18~\cite{resnet} on this dataset, where for Ordered Dropout, we train independent batch normalization layers for every $p \in \Dp$ as Ordered Dropout affects distribution of layers' outputs. We perform standard data augmentation and preprocessing, \textit{i.e.}\; a random crop to shape $(24, 24, 3)$ followed by a random horizontal flip and then we normalize the pixel values according to their mean and standard deviation. 

\textbf{(F)EMNIST.} {EMNIST} consists of $28 \times 28$ gray-scale images of both numbers and upper and lower-case English characters, with $62$ possible labels in total. The digits are partitioned according to their author, resulting in a naturally heterogeneous federated dataset. {EMNIST} is collection of all the data-points. We do not use any preprocessing on the images. We train a Convolutional Neural Network (CNN), which contains two
convolutional layers, each with $5\times5$ kernels with $10$ and $20$ filters, respectively. Each convolutional layer is followed by a $2 \times 2$ max pooling layer. Finally, the model has a
dense output layer followed by a softmax activation. FEMNIST refers to the federated variant of the dataset, which has been partitioned based on the writer of the digit/character \cite{leaf}.

\textbf{Shakespeare.} {Shakespeare} dataset is also derived from the benchmark designed by \cite{leaf}. The dataset corpus is the collected works of William Shakespeare, and the clients correspond to roles in Shakespeare’s plays with at least two lines of dialogue. Non-federated dataset is constructed as a collection of all the clients' data-points in the same way as for {FEMNIST}. For the preprocessing step, we apply the same technique as TFF dataloader, where we split each client’s lines into sequences of $80$ characters, padding if necessary. We use a vocabulary size of $90$ entities -- $86$ characters contained in Shakespeare’s work, beginning and end of line tokens, padding tokens, and out-of-vocabulary tokens. We perform next-character prediction on the clients’ dialogue using an Recurrent Neural Network (RNN). The RNN takes as input a sequence of $80$ characters, embeds it into a learned $8$-dimensional space, and passes the embedding through two LSTM~\cite{lstm} layers, each with $128$ units. Finally, we use a softmax output layer with $90$ units. 
 For this dataset, we don't apply Ordered Dropout to the embedding layer, but only to the subsequent LSTM layers, due to its insignificant impact on the size of the model. 

\subsection{Implementation Details}
\label{sec:implementation_details}

\tool was built on top of \ttt{PyTorch}~\cite{pytorch2019} and  \ttt{Flower}~\cite{beutel2020flower}, an open-source federated learning framework which we extended to support Ordered, Federated, and  Adaptive Dropout and Knowledge Distillation. Our OD aggregation was implemented in the form of a \ttt{Flower} strategy that considers each client maximum width $\pim$. Server and clients run in a multiprocess setup, communicating over gRPC\footnote{\url{https://www.grpc.io/}} channels and can be distributed across multiple devices. To scale to hundreds of clients per cloud node, we optimised \ttt{Flower} so that clients only allocate GPU resources when actively participating in a federated client round. This is accomplished by separating the forward/backward propagation of clients into a separate spawned process which frees its resources when finished. Timeouts are also introduced in order to limit the effect of stragglers or failed client processes to the entire training round.

\subsection{Hyperparameters.}

In this section we lay out the hyperparameters used for each $\langle \text{model, dataset, deployment} \rangle$ tuple. 

\subsubsection{Non-federated Experiments}

For centralised training experiments, we employ \ttt{SGD} with momentum $0.9$ as an optimiser. We also note that the training epochs of this setting are significantly fewer that the equivalent federated training rounds, as each iteration is a full pass over the dataset, compared to an iteration over the sampled clients.

\begin{itemize}[leftmargin=0pt,label={},noitemsep,topsep=0pt]
    \item \textbf{ResNet18.} We use batch size $128$, step size of $0.1$ and train for $300$ epochs. We decrease the step size by a factor of $10$ at epochs $150$ and $225$.
    \item \textbf{CNN.} We use batch size $128$ and train for $20$ epochs. We keep the step size constant at $0.1$.
    \item \textbf{RNN.} We use batch size $32$ and train for $50$ epochs. We keep the step size constant at $0.1$.
\end{itemize}

\subsubsection{Federated Experiments}

For each $\langle \text{model, dataset} \rangle$ federated deployment, we start the communication round by randomly sampling $10$ clients to model client availability and for each available client we run one local epoch. We decrease the client step size by $10$ at $50\%$ and $75\%$ of total rounds. We run 500 global rounds of training across experiments and use \ttt{SGD} without momentum.

\begin{itemize}[leftmargin=0pt,label={},noitemsep,topsep=0pt]
    \item \textbf{ResNet18.} We use local batch size $32$ and step size of $0.1$. 
    \item \textbf{CNN.} We use local batch size $16$ and step size of $0.1$.
    \item \textbf{RNN.} We use local batch size $4$ and step size of $1.0$.
\end{itemize}

\section{Additional Experiments}

\subsubsection*{Ordered Dropout exactly recovers SVD: Empirical evidence}

 In this experiment, we want to empirically back up our theoretical claims about Ordered Dropout recovering SVD. To this end, we generate a normal random $5\times5$ matrix, compute its SVD ($USV^T$) and set $A = UDV^T$ where D is a diagonal matrix with $5,4,\dots,1$ on the diagonal to ensure full rank with distinct singular values. We initialize $U,V$ (1st \& 2nd layer, see Theorem \ref{thm:od_is_svd}'s proof) as normal random matrices. We then run SGD with lr=0.1 for 10k iterations. In each step, we sample 32 points from the 5D unit ball and apply OD sampled from the uniform distribution, taking an SGD step. In Fig.~\ref{fig:od-is-svd}, we display the Frobenius norm between the best $k$-rank approximation of $A$ and the corresponding OD subnetwork ($p=[0.2,\dots,1]$, for $k=1,\dots,5$). This converges and recovers SVD.

\begin{figure}[h]
    \centering
    \vspace{-0.75em}
    \includegraphics[width=0.4\columnwidth]{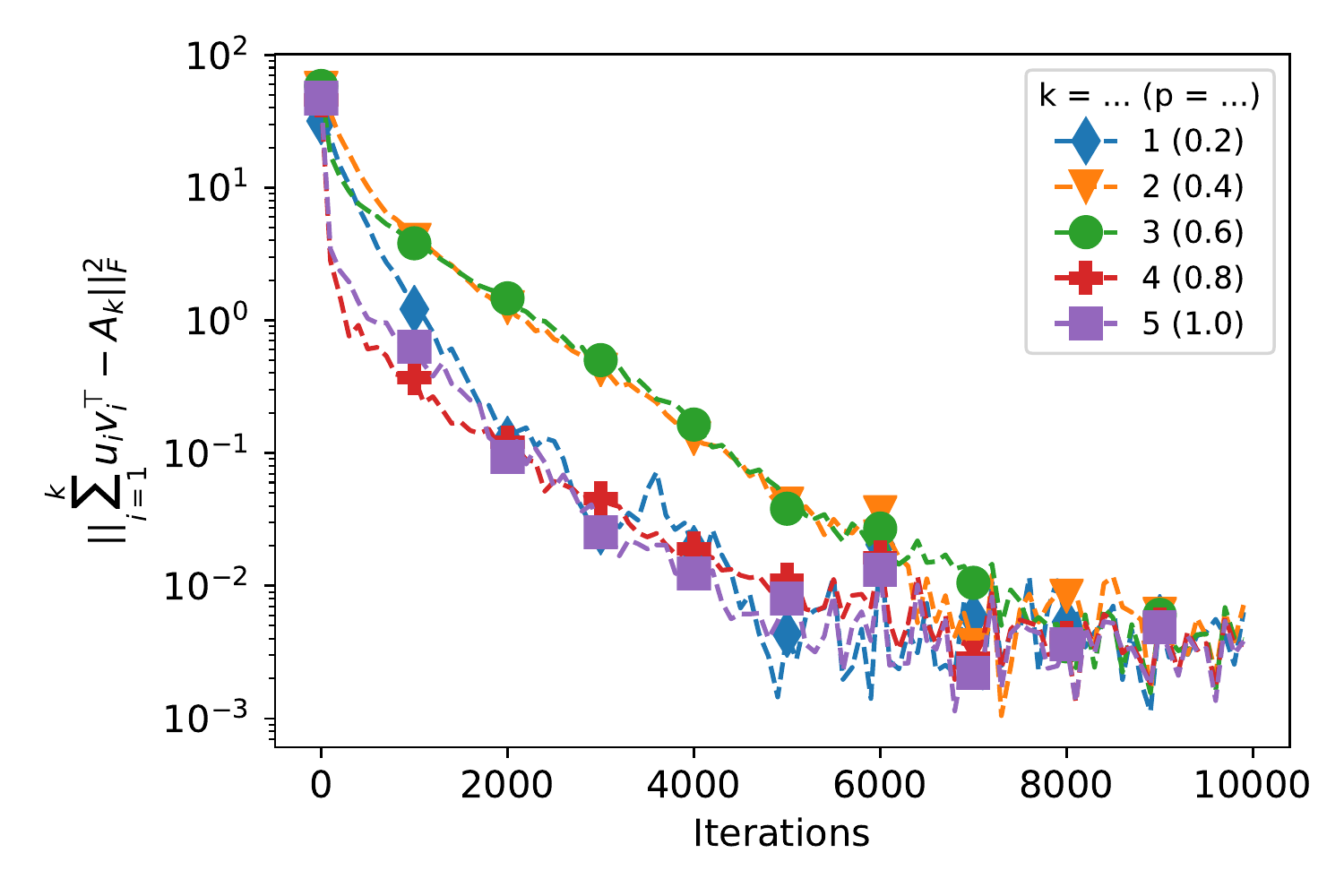}
    \vspace{-1em}
    \caption{Frobenius norm of best $k$-rank approximation vs. corresponding OD sub-network.}
    \label{fig:od-is-svd}
    \vspace{-1em}
\end{figure}

\subsubsection*{Local training: Ordered Dropout vs. Random Dropout vs. Single Models}

\blue{In this section, we present an extended version of Fig.~\ref{fig:non-fl-baseline}, depicted in Fig.~\ref{fig:non-fl-baseline-with-rd}. We expand the comparison of OD against Random Dropout, where we pick the original model (SM1) and utilise different dropout rates for obtaining submodels. It can be easily witnessed that performance drops catastrophically in the latter case.}

\begin{figure*}[h]
    \vspace{-0.2cm}
    \centering
    \begin{tabular}{ccc}
        \subfloat[ResNet18 - CIFAR10]{
            \includegraphics[width=0.3\textwidth]{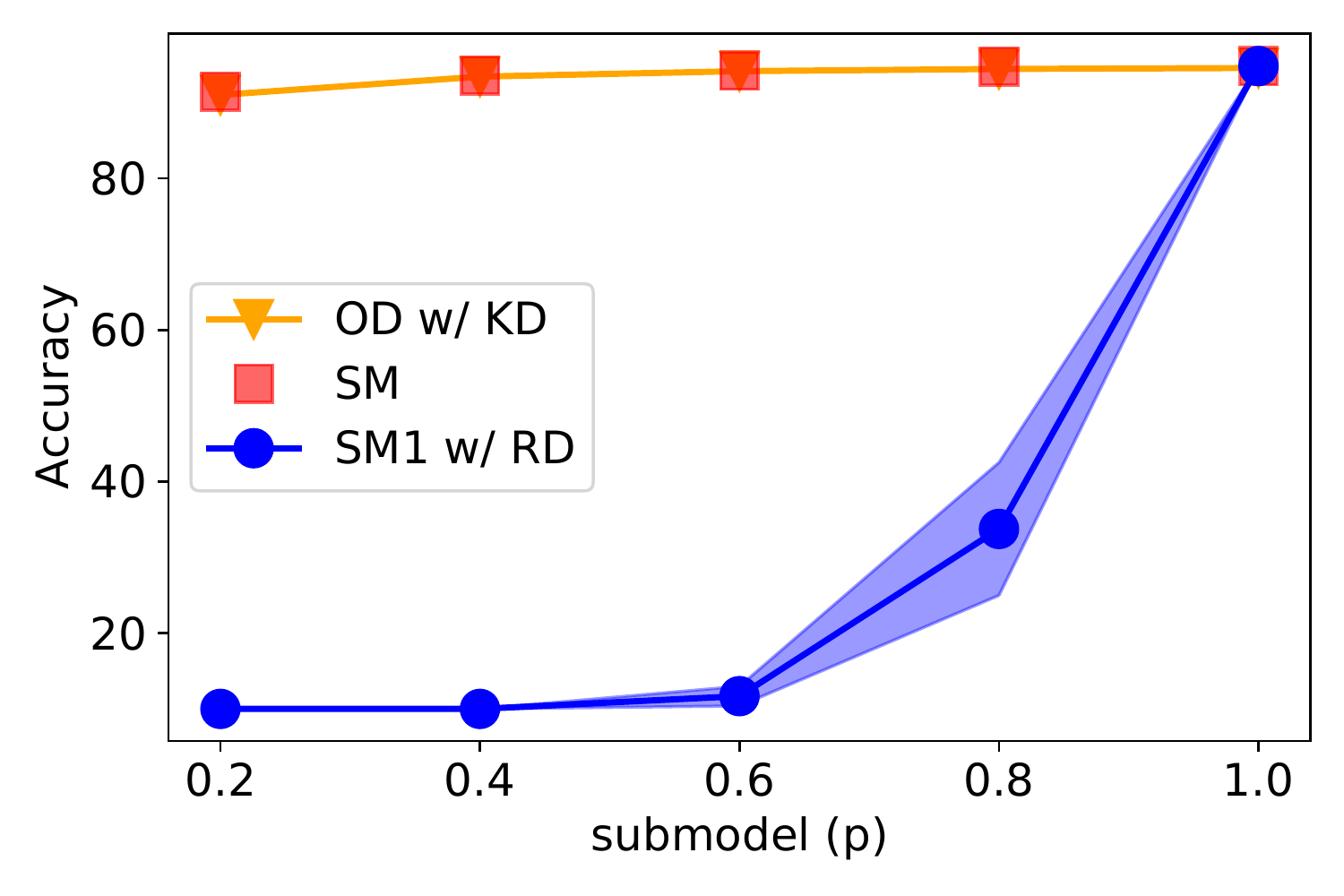}
        }
        \subfloat[CNN - EMNIST]{
            \includegraphics[width=0.3\textwidth]{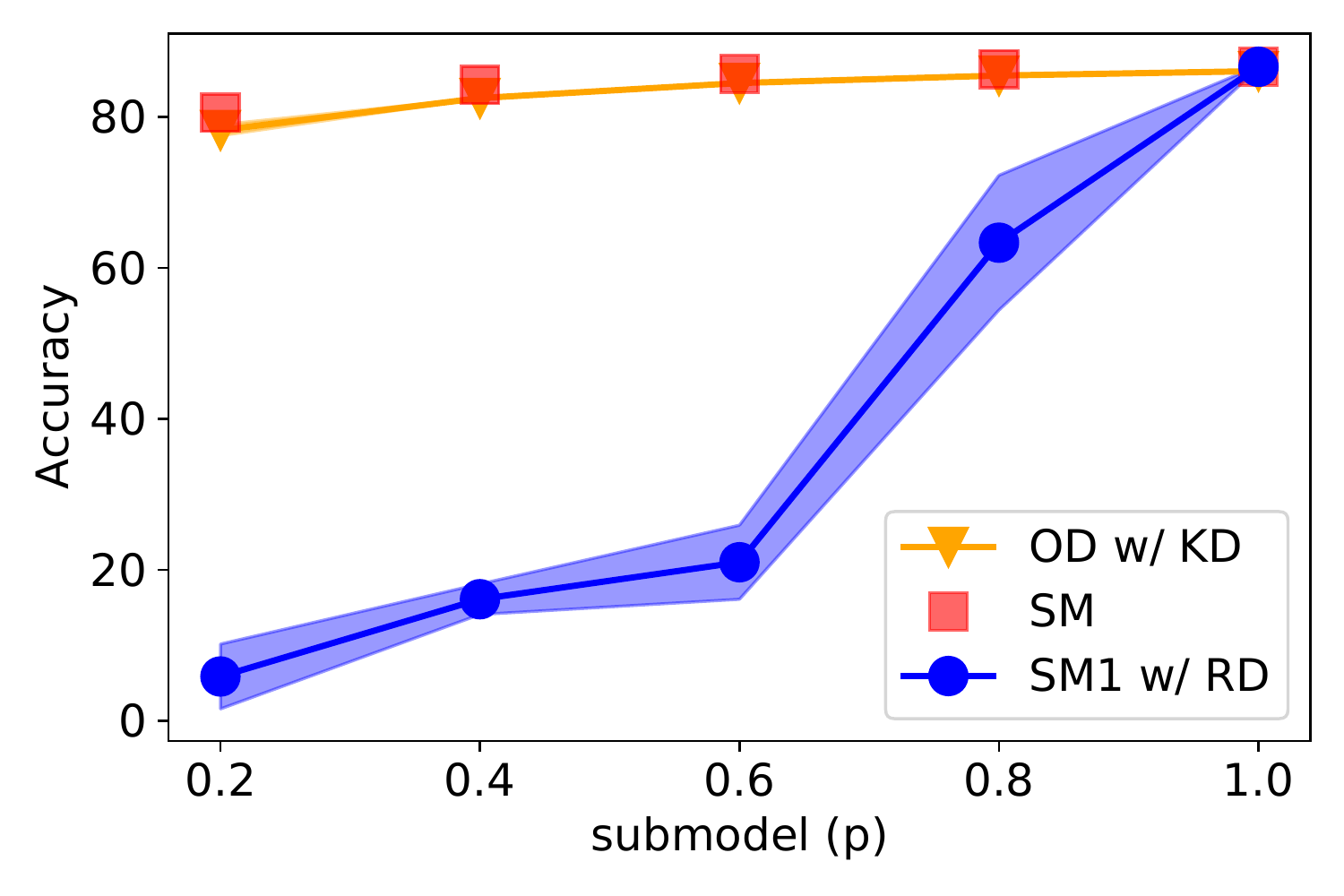}
        }
        \subfloat[RNN - Shakespeare]{
            \includegraphics[width=0.3\textwidth]{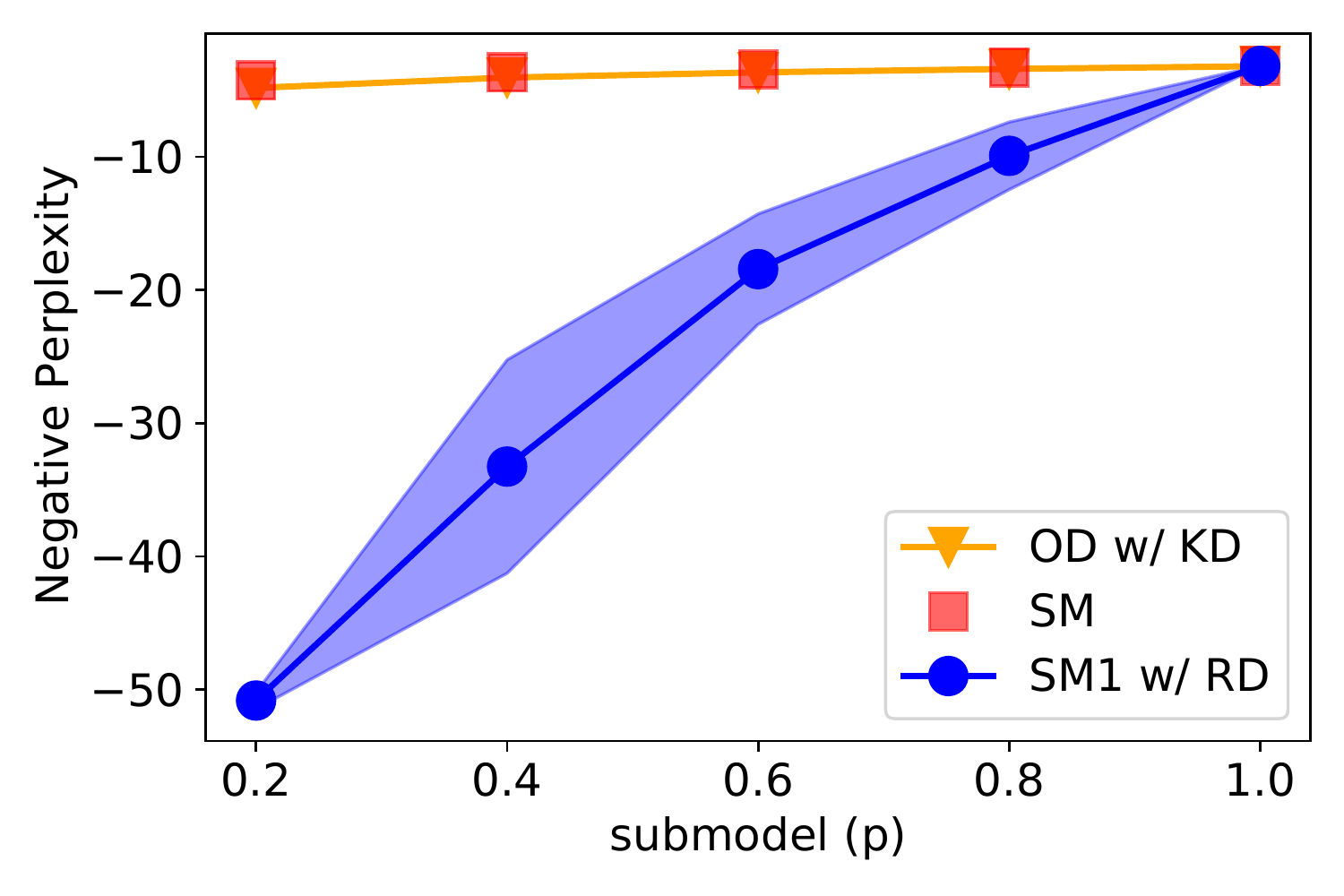}
        }
    \end{tabular}
    \vspace{-0.6em}
    \caption{\blue{Full non-federated datasets. OD-Ordered Dropout with $\Dp = \U_5$, SM-single independent models, KD-knowledge distillation, SM1 w/ RD-end-to-end trained full model with submodels obtained using Random Dropout instead Ordered Dropout.} }
    \label{fig:non-fl-baseline-with-rd}
    \vspace{-0.2em}
\end{figure*}

\subsubsection*{Convergence of \tool}

In this section, we depict the convergence behaviour of \tool compared to the eFD baseline across 500 global training rounds. We follow the same setup as in \S~\ref{sec:perf_eval}. From Fig.~\ref{fig:convergence}, it can be easily witnessed that \tool (with or without distillation) leads to smoother convergence and yields lower losses across the three (model,dataset) combinations.

\begin{figure*}[h!]
    \centering
    \begin{tabular}{ccc}
        \subfloat[ResNet18 - CIFAR10]{
            \includegraphics[width=0.3\textwidth]{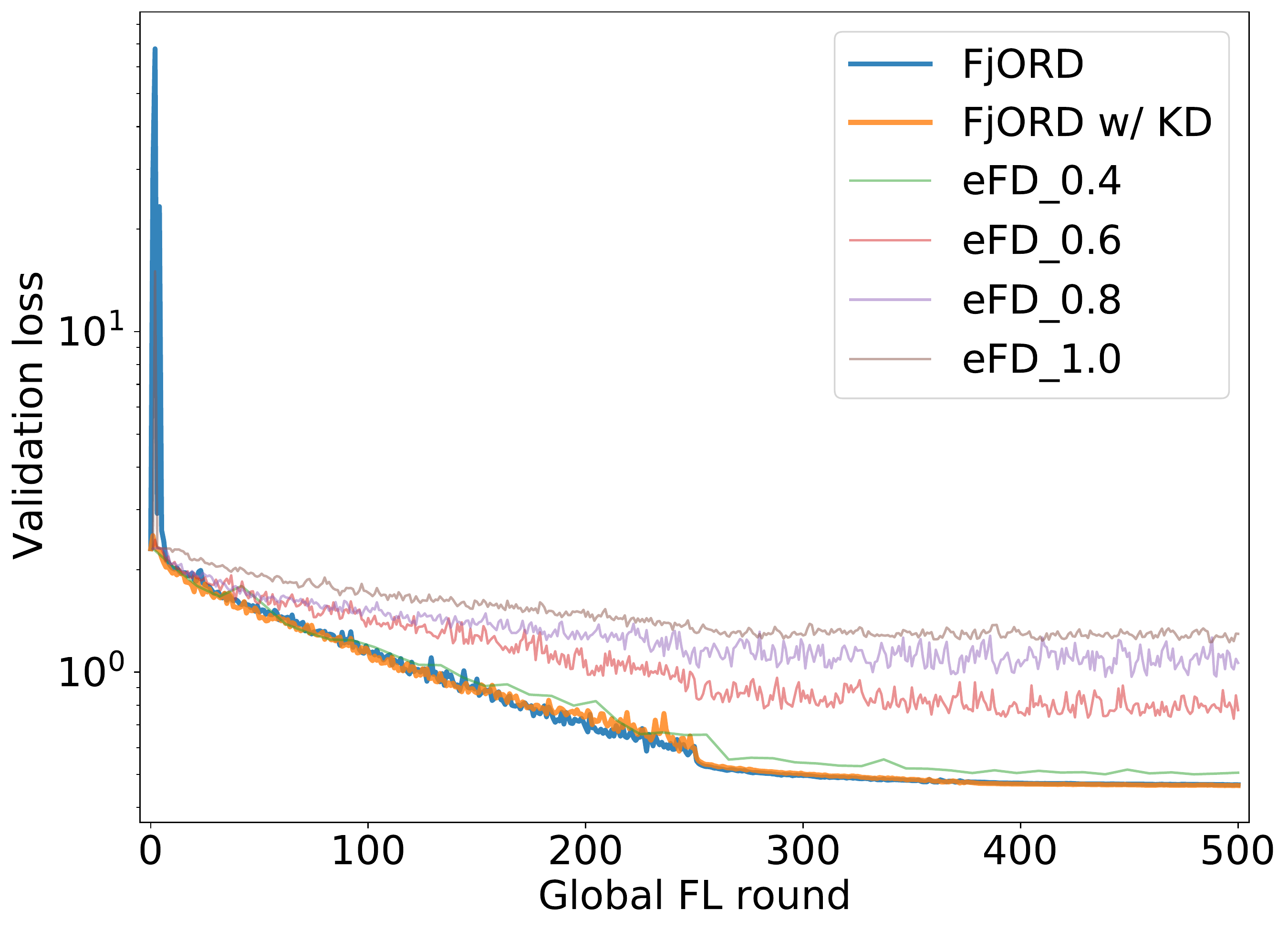}
            }
        \hfill
        \subfloat[CNN - FEMNIST]{
            \includegraphics[width=0.3\textwidth]{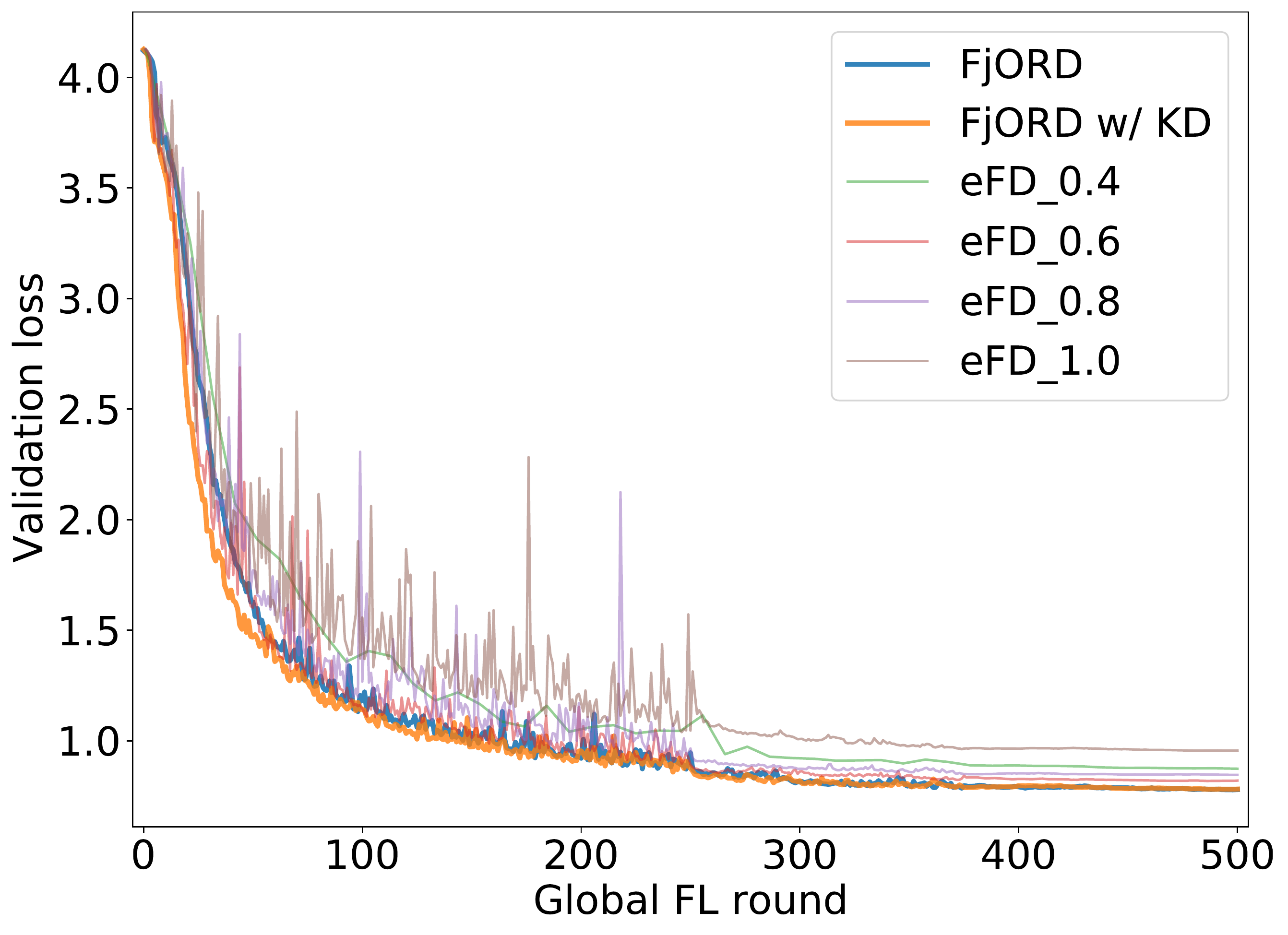}
            }
        \hfill
        \subfloat[RNN - Shakespeare]{
            \includegraphics[width=0.3\textwidth]{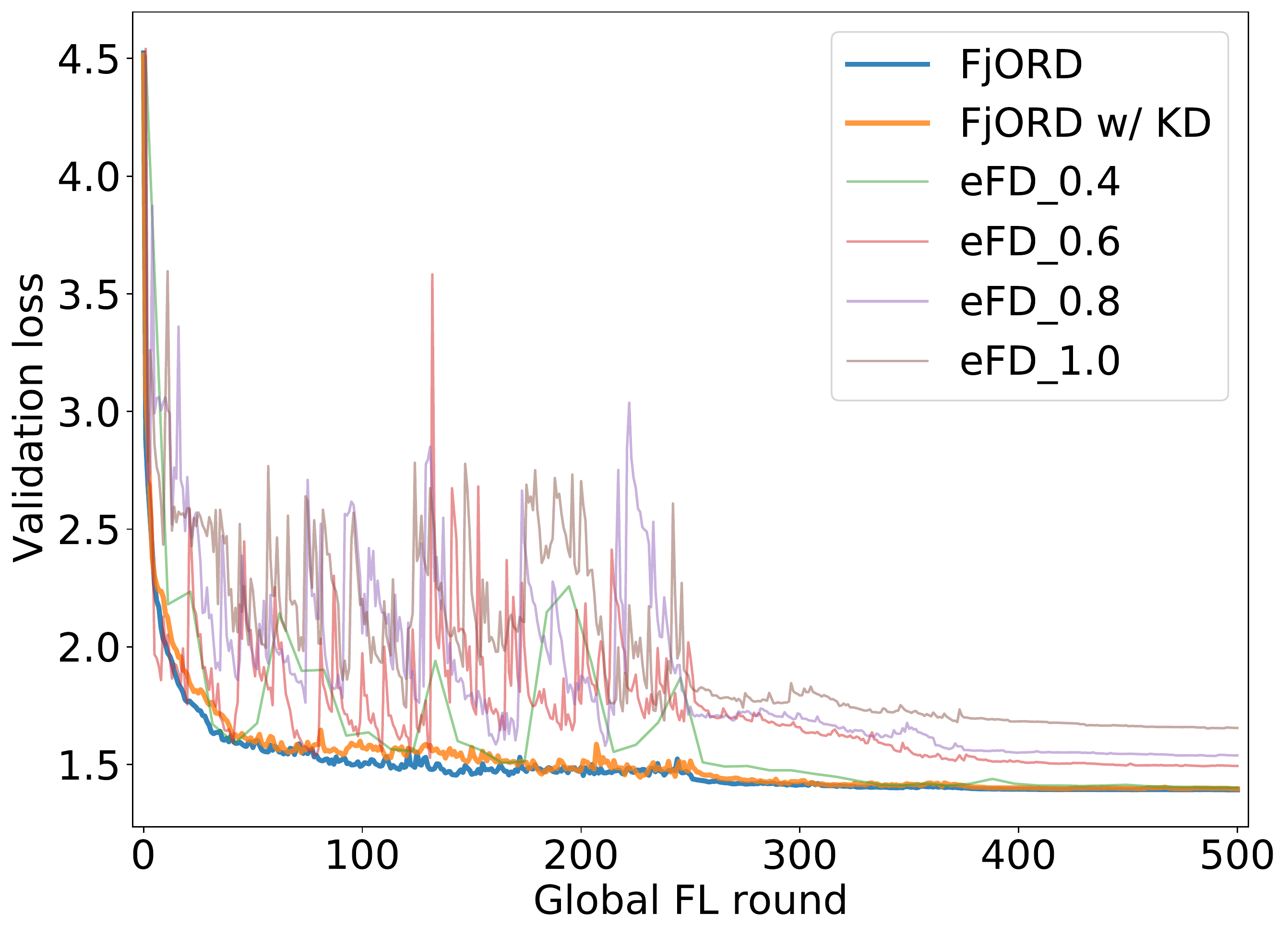}
            }
    \end{tabular}
    \caption{Convergence of \tool vs. eFD over 500 FL training rounds.}
    \label{fig:convergence}
\end{figure*}

\subsubsection*{Federated Dropout vs. eFD}

In this section we provide evidence of eFD's accuracy dominance over FD. We inherit the setup of the experiment in \S~\ref{sec:perf_eval} to be able to compare results and extrapolate  across similar conditions.
From Fig.~\ref{fig:fd-perf-comparison}, it is clear that eFD's performance dominates the baseline FD by 
27.13-33 percentage points (pp) (30.94 pp avg.) on CIFAR10, 4.59-9.04 pp (7.13 pp avg.) on FEMNIST and 1.51-6.96 points (p) (3.96 p avg.) on Shakespeare. The superior performance of eFD, as a technique, can be attributed to the fact that it allows for an adaptive dropout rate based on the device capabilities. As such, instead of imposing a uniformly high dropout rate to accommodate the low-end of the device spectrum, more capable devices are able to update larger portion of the network, thus utilising its capacity more intelligently.

However, it should be also noted that despite FD's accuracy drop, on average it is expected to have a lower computation/upstream network bandwidth/energy impact on devices of the higher end of the spectrum, as they run the largest dropout rate possible to accommodate the computational need of their lower-end counterparts. This behaviour, however, can also be interpreted as wasted computation potential on the higher end -- especially under unconstrained environments (\textit{i.e.}~device charging overnight) -- at the expense of global model accuracy.

\begin{figure*}[h]
    
    \centering
    \begin{tabular}{ccc}
        \subfloat[ResNet18 - CIFAR10]{
            \includegraphics[width=0.3\textwidth]{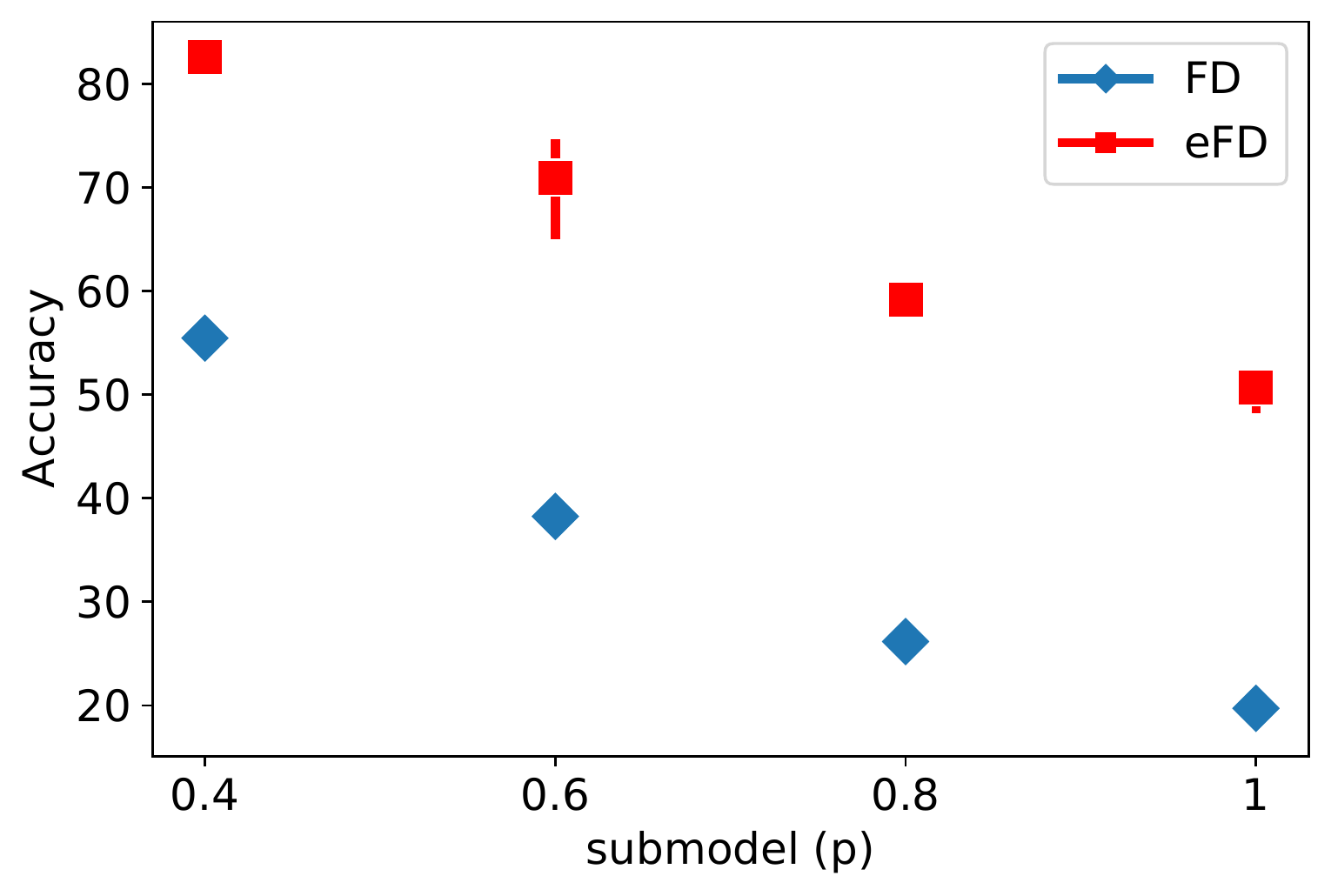}
            }
        \hfill
        \subfloat[CNN - FEMNIST]{
            \includegraphics[width=0.3\textwidth]{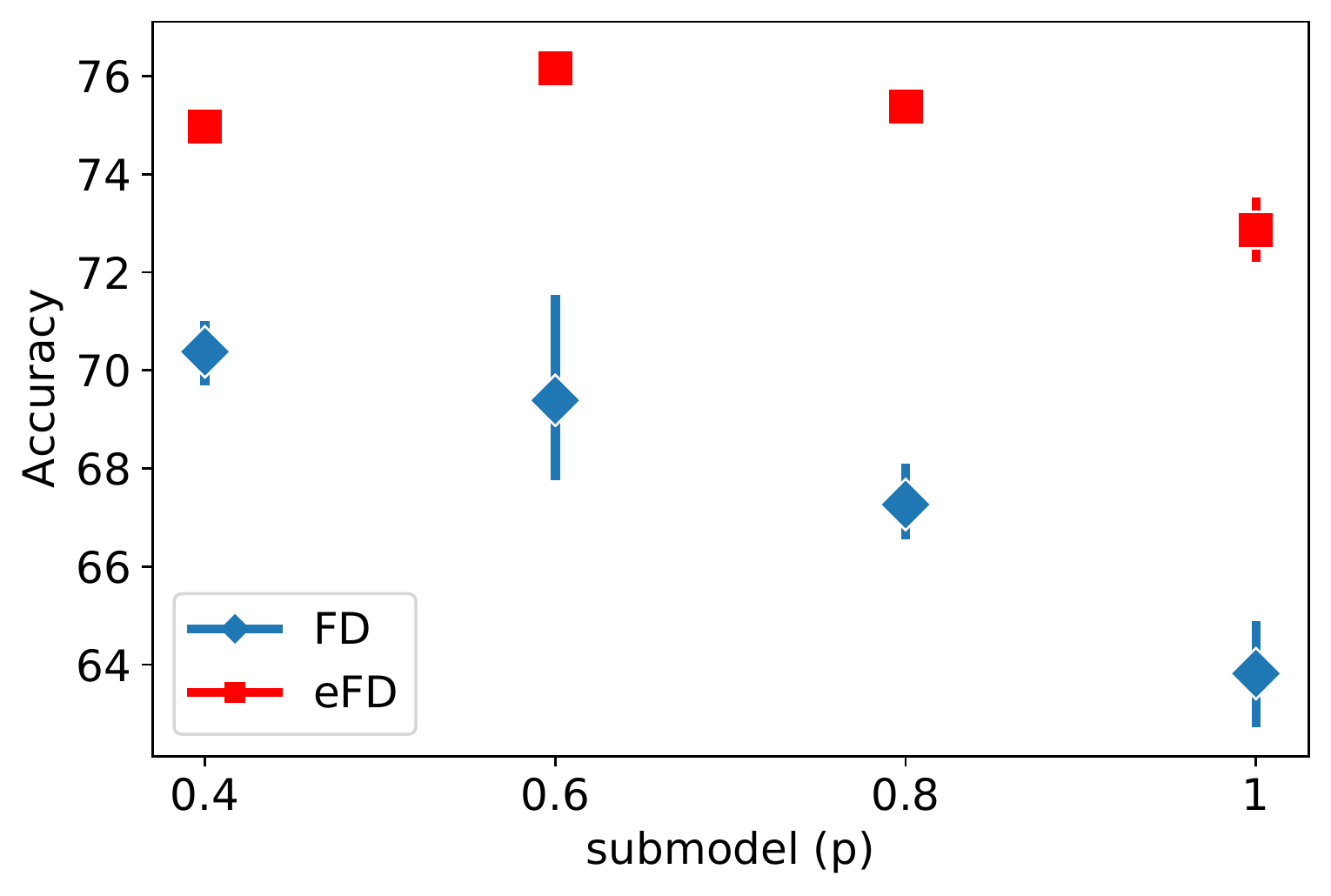}
            }
        \hfill
        \subfloat[RNN - Shakespeare]{
            \includegraphics[width=0.3\textwidth]{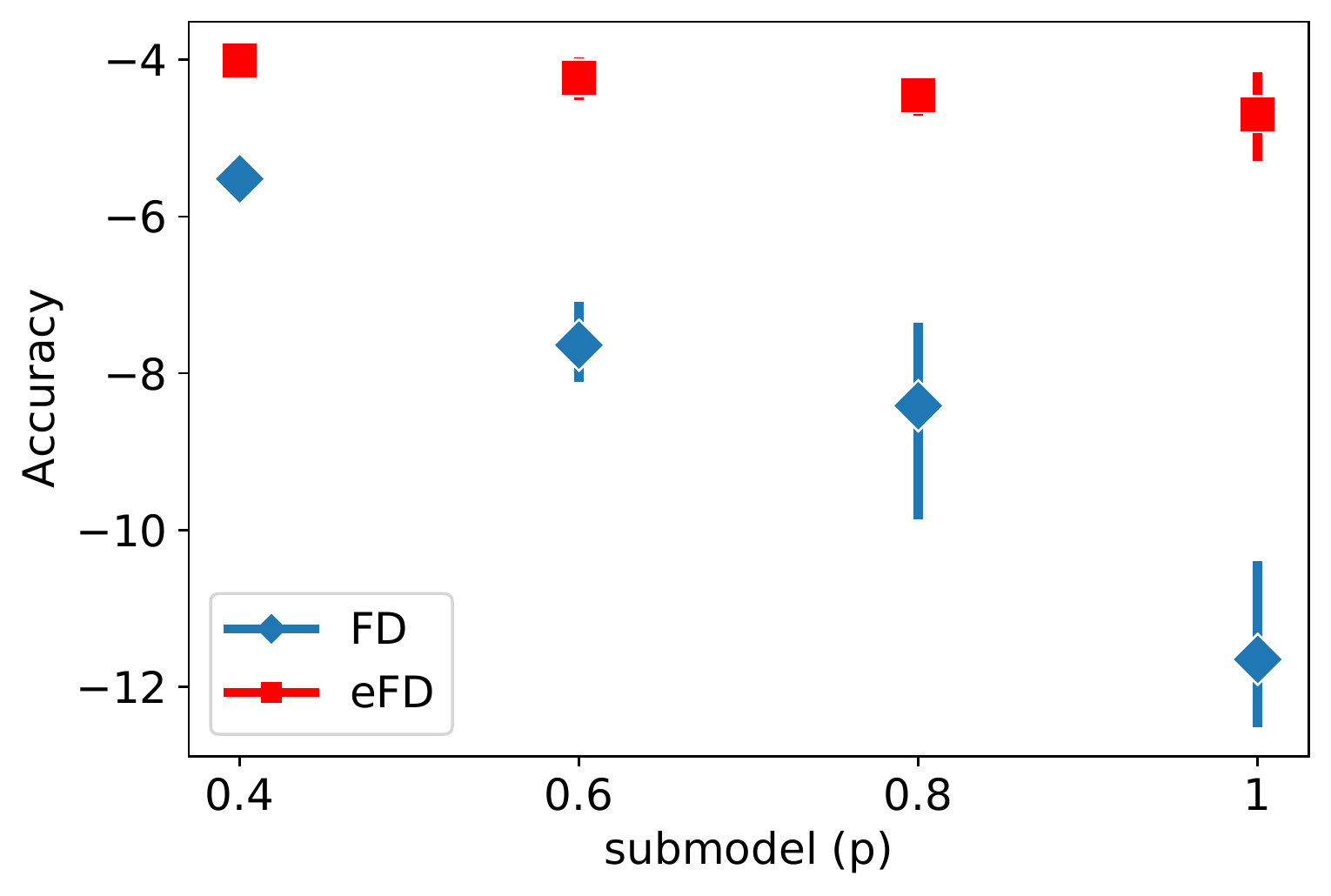}
            }
    \end{tabular}
  
    \caption{Federated Dropout (FD) vs Extended Federated Dropout (eFD). Performance vs dropout rate \textit{p} across different networks and datasets.}
    \label{fig:fd-perf-comparison}

\end{figure*}

\section{Limitations}
\label{sec:limitations}

In this work we have presented a method for training DNNs in centralised (through OD) and federated (through \tool) settings. We have evaluated our method in terms of accuracy performance against baselines and enhancements of those that represent the state-of-the-art at the time of writing across three different datasets and tasks in IID and non-IID settings. Our evaluation has adopted uniform sampling of $p$ values and considered to be constant across layers of the network at hand mainly for simplicity and tractability of the problem. However, this does not degrade the generality of our approach.

While we do target heterogeneous devices found in the wild, such as mobile phones, we have not measured the performance of our technique on such devices, mainly due to the lack of maturity in tools for on-device training.
However, we have demonstrated the performance gains in terms of FLOPs and parameters in Table~\ref{tab:macs}, which are directly correlated with on-device performance, memory footprint and communication size. We defer on-device benchmarking and in-the-wild deployment at scale for future work.

Last, we have assumed the clusters of devices to be given and the different device load to be assumed in the modelling of these clusters. While we do provide results across different number and distribution of clusters in \S~\ref{sec:flexibility}, we treat the device-to-cluster association outside the scope of this work.

\end{document}